%%%%%%%%%%%%%%%%%%%%%%%%%%%%%%%%%%%%%%%%%%%%%%%%%%%%%%%%%%%%%%%%%%%%%%%%%%%%%%%%
%2345678901234567890123456789012345678901234567890123456789012345678901234567890
%        1         2         3         4         5         6         7         8

\documentclass[letterpaper, 10 pt, conference]{ieeeconf}  % Comment this line out if you need a4paper

\usepackage{graphicx}
\usepackage{amsmath}
\usepackage{algorithm}% http://ctan.org/pkg/algorithms
\usepackage{algorithmicx}
\usepackage{algpseudocode}
\usepackage{float}
% comment if you want to restore the original font size of the algorithm
\newtheorem{theorem}{Theorem}

\newtheorem{lemma}[theorem]{Lemma}
\newtheorem{assumption}{Assumption}

\usepackage{cases}

\usepackage{etoolbox}\AtBeginEnvironment{algorithmic}{\small} 
\usepackage{xcolor}
\usepackage{subcaption}
\usepackage{url}

% \usepackage{caption}
% \captionsetup[figure]{font=small}

% \captionsetup[figure]{font=footnotesize}
% \captionsetup[table]{font=footnotesize}

\IEEEoverridecommandlockouts                              % This command is only needed if 
                                                          % you want to use the \thanks command

\overrideIEEEmargins                                      % Needed to meet printer requirements.

%In case you encounter the following error:
%Error 1010 The PDF file may be corrupt (unable to open PDF file) OR
%Error 1000 An error occurred while parsing a contents stream. Unable to analyze the PDF file.
%This is a known problem with pdfLaTeX conversion filter. The file cannot be opened with acrobat reader
%Please use one of the alternatives below to circumvent this error by uncommenting one or the other
%\pdfobjcompresslevel=0
%\pdfminorversion=4

% See the \addtolength command later in the file to balance the column lengths
% on the last page of the document

% The following packages can be found on http:\\www.ctan.org
%\usepackage{graphics} % for pdf, bitmapped graphics files
%\usepackage{epsfig} % for postscript graphics files
%\usepackage{mathptmx} % assumes new font selection scheme installed
%\usepackage{times} % assumes new font selection scheme installed
%\usepackage{amsmath} % assumes amsmath package installed
%\usepackage{amssymb}  % assumes amsmath package installed

\title{\LARGE \bf
% Reachability-Based Safe Occlusion-Aware MPC}
OA-MPC: Occlusion-Aware MPC for Guaranteed Safe\\ Robot Navigation with Unseen Dynamic Obstacles}

% \author{Albert Author$^{1}$ and Bernard D. Researcher$^{2}$% <-this % stops a space
\author{Roya Firoozi, Alexandre Mir, Gadiel Sznaier Camps, Mac Schwager
% \thanks{These authors contributed equally to this work.}% <-this % stops a space
\thanks{This work is submitted on October 17th, 2023. This work was supported in part by ONR grant N00014-18-1-2830. Toyota Research Institute provided funds to support this work.  The first author was supported on an ASEE eFellows fellowship.}
\thanks{Roya Firoozi, is with the Aeronautics \& Astronautics Department at Stanford University, Stanford CA 94305 USA (email: rfiroozi@stanford.edu) }
\thanks{Alexandre Mir, is with the Mechanical Engineering Department, Swiss Federal Institute of Technology Zürich (ETH), 8092 Zurich, Switzerland (email: alexmir@student.ethz.ch) }
\thanks{Gadiel Sznaier Camps, is with the Aeronautics \& Astronautics Department at Stanford University, Stanford CA 94305 USA (email: gsznaier@stanford.edu) }
\thanks{Mac Schwager, is with the Aeronautics \& Astronautics Department at Stanford University, Stanford CA 94305 USA (email: schwager@stanford.edu) }
% \thanks{$^{1}$ Department of Aeronautics \& Astronautics, Stanford University {\tt\small rfiroozi, gsznaier, schwager@stanford.edu}}%
% \thanks{$^{2}$ Mechanical Engineering Department, ETH Zurich {\tt\small alexmir@student.ethz.ch}}%
}

\begin{document}

\maketitle
\thispagestyle{empty}
\pagestyle{empty}

%%%%%%%%%%%%%%%%%%%%%%%%%%%%%%%%%%%%%%%%%%%%%%%%%%%%%%%%%%%%%%%%%%%%%%%%%%%%%%%%
\begin{abstract}
For safe navigation in dynamic uncertain environments, robotic systems rely on the perception and prediction of other agents. Particularly, in occluded areas where cameras and LiDAR give no data, the robot must be able to reason about the potential movements of invisible dynamic agents. This work presents a provably safe motion planning scheme for real-time navigation in an a priori unmapped environment, where occluded dynamic agents are present. Safety guarantees are provided based on reachability analysis. Forward reachable sets associated with potential occluded agents, such as pedestrians, are computed and incorporated into planning. An iterative optimization-based planner is presented that alternates between two optimizations: Nonlinear Model Predictive Control (NMPC) and collision avoidance. The recursive feasibility of the MPC is guaranteed by introducing a terminal stopping constraint. The effectiveness of the proposed algorithm is demonstrated through simulation studies and hardware experiments with a TurtleBot robot equipped with a LiDAR system. The video of experimental results is also available at \tt{\url{https://youtu.be/OUnkB5Feyuk}.}

\end{abstract}

\section{INTRODUCTION}
\label{SEC1}

Robotic systems rely on various types of sensors including range-based sensors such as LiDAR or depth cameras to perceive the surrounding environment. However, sensors are susceptible to occlusion. The field of view of the robot's sensors limits its visibility, and dynamic agents can hide undetected in occluded regions.
% Occlusions can be caused by static objects in the scene such as walls, buildings, trees, etc, or by dynamic obstacles such as pedestrians, cars, or other agents.
In a dynamic environment, safe trajectory planning for a robot requires predicting the future behavior of other agents present in the scene. However, occluded areas may hide undetected dynamic agents, whose trajectory a robot cannot predict. Therefore, to safely navigate in an a priori unmapped environment among dynamic agents where occlusions are present, a robot must reason about all potential future motions of all potential dynamic agents that might be hidden in the occluded regions.  
% In dynamic environment, the occlusion may be produced by other moving agents present in the scene.

\begin{figure}
\centering
\begin{minipage}{\columnwidth}
    \includegraphics[width=1\columnwidth]{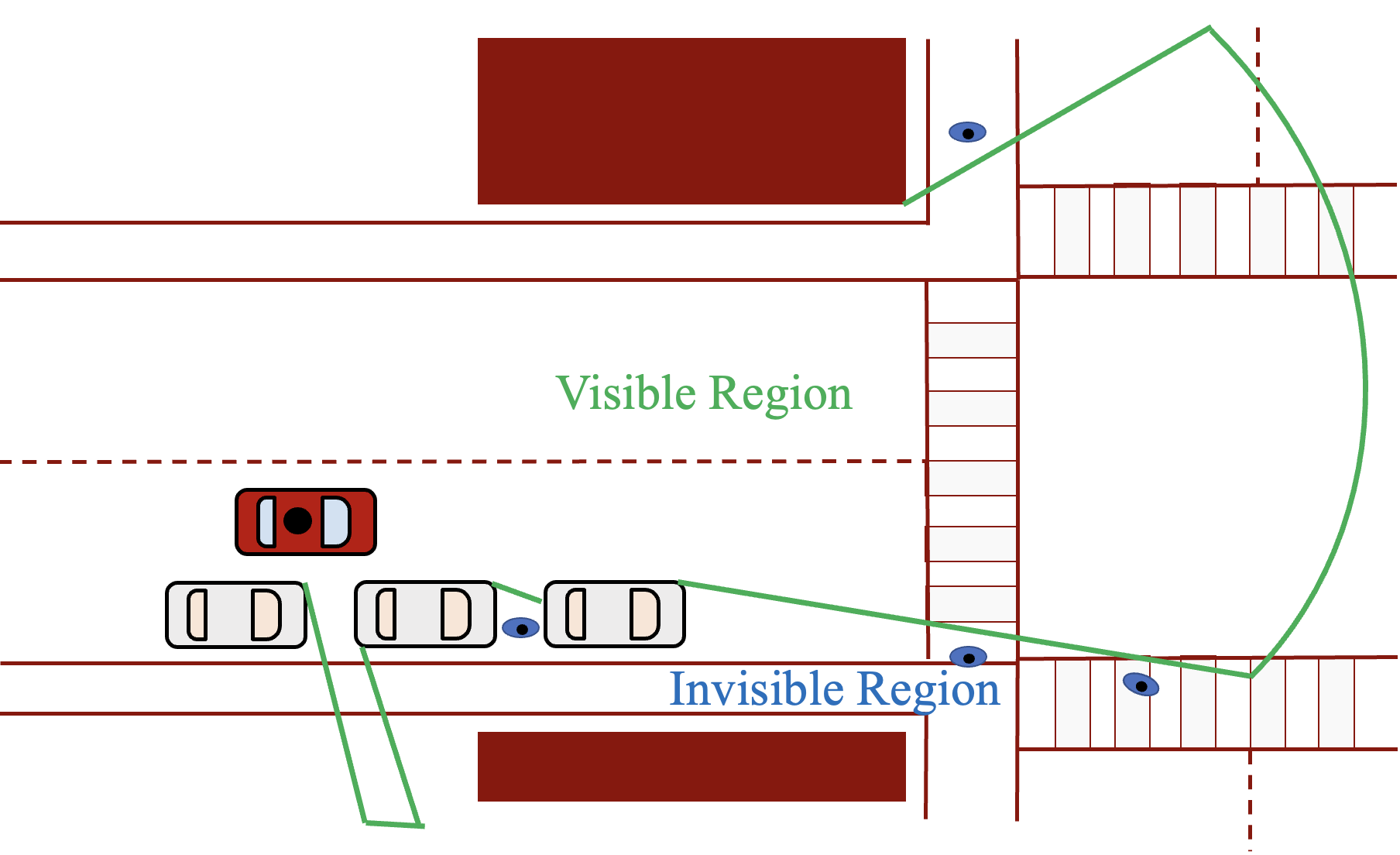}\\
     \vspace{2pt}
    % \centering(a)
\includegraphics[width=1\columnwidth]{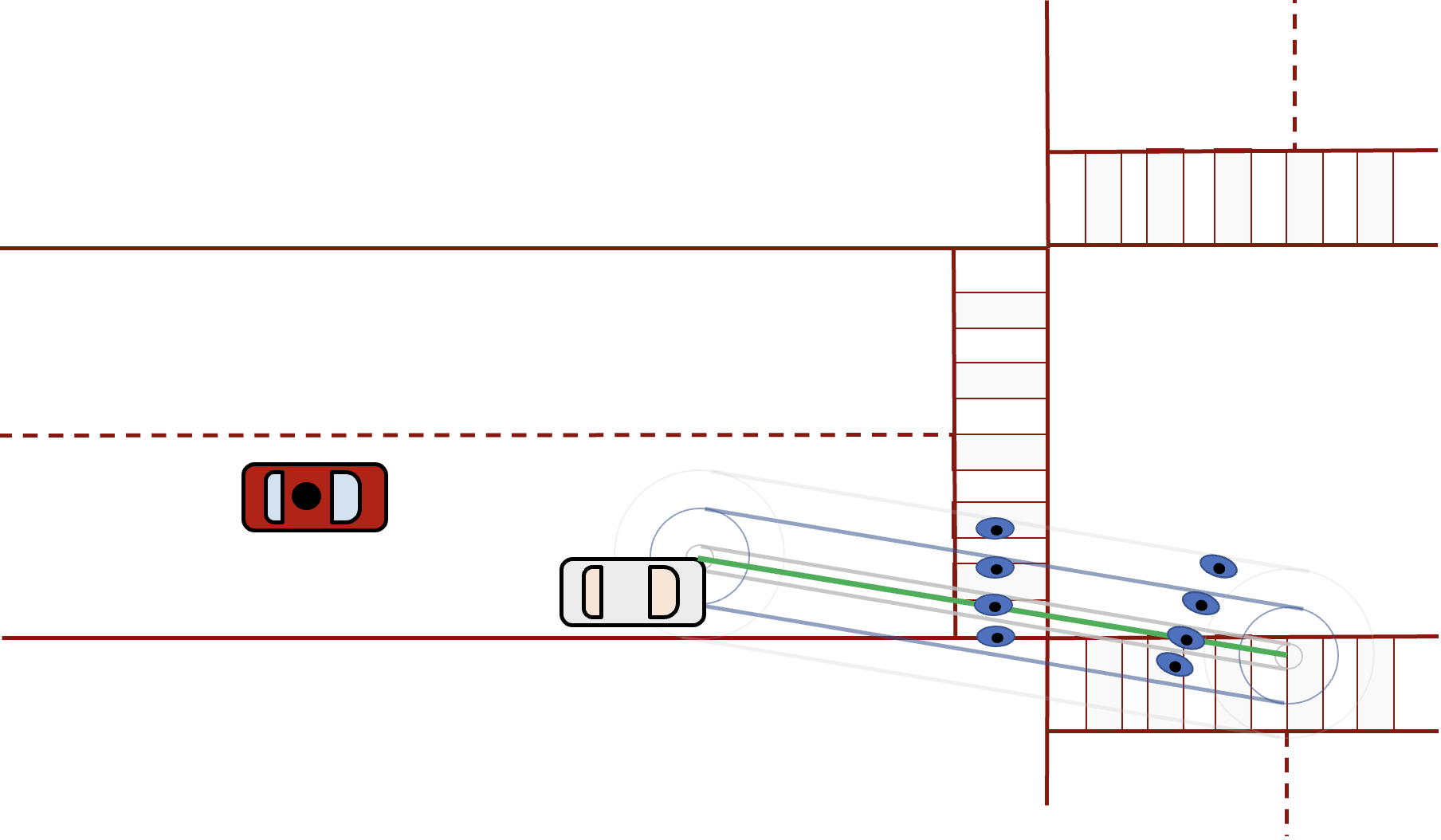}\\
     \vspace{2pt}
    % \centering(b)
\includegraphics[width=1\columnwidth]{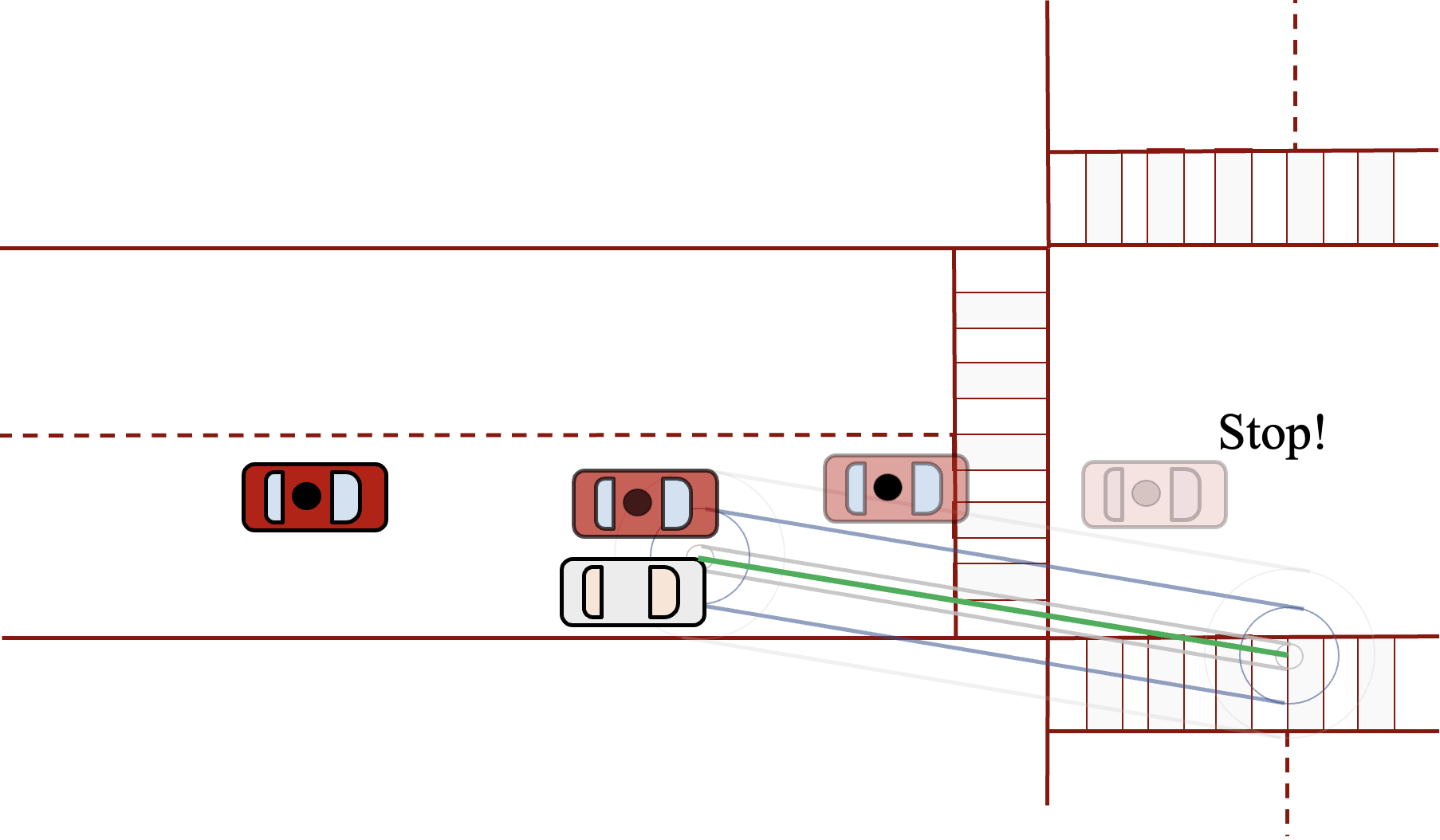}
     \vspace{2pt}
    % \centering(c)
\end{minipage}
\caption{The ego vehicle (red) equipped with a range-based sensor scans the visible regions. \textbf{Top:} Pedestrians (small blue circles) who are hidden in an occlusion are not detected by the sensor. \textbf{Middle:} Nested capsules representing the forward reachable sets for all pedestrians potentially hidden in the occlusion are computed. \textbf{Bottom:} The vehicle optimizes its trajectory while avoiding the nested capsules along its prediction horizon, coming to a stop at the end of the horizon.}
\label{fig:occluded_area}
\end{figure}

This work proposes a framework for general robotics settings including ground robots, autonomous cars, or aerial robots to safely travel through the occluded environment in real-time while avoiding both static and dynamic obstacles. Our approach relies on real-time sensor observations to detect occluded region boundaries from which a potential moving agent might emerge at any time in the future. Forward reachability analysis is performed for dynamic obstacles in both visible and occluded areas. The robot optimizes a trajectory in an MPC fashion which attempts to reach its goal position as fast as possible while it avoids collision with the forward reachable sets that are computed based on the last updated LiDAR scan of the environment. Figure \ref{fig:occluded_area} illustrates an example of our approach in an autonomous driving scenario. The top figure shows that the ego vehicle (red) uses its range sensor (LiDAR or depth camera) to scan the visible regions (the visible region boundary is shown as a solid green line). Dynamic agents or pedestrians (shown as small blue circles) who are outside the visible region are not detected by sensors. To reason about the movement of potential pedestrians in the occluded region, the robot assumes their worst-case behavior, which is moving with maximum speed in any direction from the occlusion boundary. The middle figure shows capsules which are the pre-computed reachable sets resulting from this worst-case assumption on the undetected agents in the occlusion. These capsule sets are constructed on top of the boundary of occlusion and enlarged along the MPC horizon. The nested capsule sets and repeated blue circles represent the reachable sets and potential pedestrian movement along the prediction horizon of MPC. In the bottom figure, MPC prediction steps (open-loop plan) are illustrated, where the vehicle avoids colliding with the growing reachable capsules over the prediction horizon. The minimum distance between the ego car at each step of the MPC horizon is enforced with respect to each associated reachable set along the horizon. Also, we find that requiring the robot to come to a stop at the end of its prediction horizon ensures recursive feasibility for the MPC, although, due to the MPC re-solving at each time step, the vehicle does not actually come to a stop during normal execution. Planning to stop can be seen as a guaranteed safe contingency plan in the unlikely event that the worst-case prediction comes true.

The contributions of the proposed approach are summarized below:
\begin{itemize}
    \item {A novel reachability analysis is proposed based on the line-of-sight of LiDAR at the boundary of the occluded region to determine the worst-case location of all potential occluded agents over a time horizon.}
    \item {A computationally efficient iterative optimization scheme is proposed for online occlusion-aware planning.}
    \item A formal guarantee for recursive feasibility (which ensures collision avoidance) of the proposed MPC scheme is provided. 
\end{itemize}

The paper is organized as follows: Sec.~\ref{SEC2} reviews the related work. Sec.~\ref{SEC3} provides the required notations. Sec.~\ref{SEC4} describes the problem definition. Sec.~\ref{SEC5} provides recursive feasibility guarantees for the proposed MPC scheme. Sec.~\ref{SEC6} presents our proposed iterative optimization algorithm. Sec.~\ref{SEC7} presents the applications and numerical results based on simulation studies and real-world experiments. Sec.~\ref{SEC8} makes concluding remarks and discusses the future work.

\section{Related work}
\label{SEC2}
{\textbf{Reachability Analysis: }Reachability analysis is a powerful set of theoretical tools to analyze safety-critical systems. They have been extensively studied and widely used to achieve safety guarantees \cite{Gillula2010, Fisac2015}.
To perform reachability analysis set-based approaches or Hamilton-Jabobian (HJ) reachability are employed. While set-based approaches are limited to linear dynamics and therefore cannot provide exact reachability for nonlinear systems, Hamilton-Jabobian partial differential equations (PDEs) can handle nonlinear dynamics. However, HJ reachability scales poorly with state dimension and is computationally expensive. Thus, there is a trade-off between providing safety guarantees and computational efficiency and stronger guarantees can be provided with higher computational cost. 

% Hamilton-Jabobian (HJ) reachability is a standard method to analyze safety, but it is computationally expensive.
In particular, for real-time applications such as robot navigation, to address this challenge and reduce the online computation burden, the authors in 
% \cite{Dabadie2014, Ding2010}, and
\cite{Ding2011} propose precomputation of HJ reachable sets offline and storing them in a look-up table to be used online. In \cite{Herbert2017}, the authors present a fast and safe tracking approach (FaSTrack) in which slow offline computation provides safety guarantees for a faster online planner.
FaSTrack includes a complex offline tracking system and a simplified online planning system. A pursuit-evasion game between the two systems is precomputed using HJ reachability and a safety bubble is built around the planning system. Online, based on the relative state of the two systems a safe controller is selected, using the precomputed look-up table. 
In \cite{David2018}, the authors propose to extend the FaSTrack approach to a meta-planning scheme in which a refined offline computation enables safe switching between different online planners. However, these approaches are restricted to static environments and are not applicable to dynamic environments with moving obstacles. In \cite{Zhan-RSS-21}, the authors present a pursuer-evader game theoretic framework for occlusion-aware driving (in a dynamic environment) with safety guarantees using reachability, and simulation studies are provided by assuming linear dynamics (double integrator) for the vehicle model. Control Barrier Functions (CBFs) \cite{ames2017control} are a widely used tool in control synthesis for ensuring safety in critical systems. CBFs convert the safety constraints of control-affine systems into state-feedback constraints that are linear with respect to the control inputs. While this method is widely used as a safety certificate, to the best of our knowledge it is not incorporated with occlusion-aware safety guarantees. Also, the reference governer \cite{GARONE2017306} is another add-on control scheme that provides state and input constraint satisfaction. However, the existing literature does not include occlusion-aware or perception-aware variants of this method as well. Compared to \cite{Ding2011} and \cite{Fisac2015}, we provide a simple reachability analysis as a safety certificate, which is computationally efficient for real-time applications. Our proposed approach is categorized as a set-based reachability method and compared to \cite{Herbert2017} and \cite{David2018}, our method is perception-aware, takes into account sensor limitations, does not require offline computation of reachable sets, and is not limited to static environments and can be used in dynamic environments as well. Compared to \cite{Zhan-RSS-21}, our approach is not limited to linear dynamic models and accommodates nonlinear dynamics as well. 

{\textbf{Active Perception and Planning: }}Joint active perception and planning help the robot to gather additional information about the occluded regions and gain visibility in a partially occluded environment. The authors in \cite{Falanga2018}, propose a perception-aware MPC framework for quadrotors that simultaneously optimizes planning and perception objectives. In \cite{Lee2020_AggressivePerceptionNavigation}, an MPC framework is coupled with a visual pipeline. A deep optical flow dynamics presented as a combination of optical flow and robot dynamics is used to predict the movement of points of interest. The proposed Pixel MPC algorithm controls the robot to accomplish a high-speed task while maintaining the visibility of the important features. In \cite{Higgins2021} a geometric heuristic is proposed to define a visibility measure around a corner. The perception objective (visibility) is added to the MPC multi-objective cost to navigate the occluded corner while maximizing visibility. In our proposed approach visibility maximization is not incorporated explicitly. However, by including a minimum safety distance to the avoid set, the robot indirectly maximizes its visibility to minimize the occluded area.

Other works including \cite{Hoerman2017_CrossroadsBlindCorners} and \cite{Lee2017_CollisionRiskAssessment,Chung2009_SafeNavigation,Brechtel2014_ProbaDecisionMaking,Miller2008_Skynet,Sadou2004_OcclusionsObstacleDetection,Bouraine2014_PassivelySafePlanning}
% ,Zhan2016_NonConservativelyDefenseStrat}
consider the risk caused by occlusions, ranging from simple visibility modeling for pre-detected obstacles like \cite{Miller2008_Skynet}, to more complex multi-layered models proposed in \cite{Hoerman2017_CrossroadsBlindCorners}. 
These works minimize the risk of collision but they do not provide formal safety guarantees. In \cite{Andersen2017}, MPC planning is proposed in which the objective is to maximize visibility,
% or field-of-view, while safety constraints are checked by a higher-level state machine
but formal safety guarantees or recursive feasibility of MPC is not provided. Compared to these online navigation approaches, our proposed MPC provides safety guarantees by online computation of reachable sets. In \cite{Nager2019}, an inference and motion planning scheme is proposed to guarantee safety with respect to hypothetical hidden agents.
% that have not been observed using sensor scans.
Reachable sets are constructed by taking an occupancy map and enlarging the occluded boundaries.
However, this work assumes the availability of a class of possible intentions for other agents which is used in the prediction step of inference to evolve the dynamics of other agents. 
Compared to this work, our approach does not make any assumptions regarding the intention of hidden agents. Our proposed framework is dynamic-agnostic with respect to hidden agents and assumes the hidden agents may move in any arbitrary direction. 
% a novel MPC-based framework to navigate occluding environments that increases visibility while considering uncertainties. Uncertainties are considered through an occupancy mapping-based approach to autonomously decide if it is safe to move around an occluding corner. If it is not safe, the policy framework chooses a motion that promotes visibility as it approaches the occlusion, thereby reducing uncertainty while navigating around a corner safer and faster.
In  \cite{Penin2020_AggressiveTargetTracking} the objective is to follow a moving target while avoiding obstacles in the environment and occlusions in the image space, in a smooth but reactive way. A vision-based approach is implemented on a real quadrotor on multi-objective optimization, with an occlusion constraint formulation, keeping a target in its camera's field of view. Inspired by the coordination of human eyes, the work \cite{Chang2020_CrossDroneCoordination} has explored the visibility in multi-vehicle frameworks. Their idea is to use distributed target measurements to overcome the occlusion effects. The approach of recognizing and supervising a geometrically complex environment by flying UAVs next to it has also been described in the \cite{Zhang2019_OcclusionAwareUAV}'s work. The work \cite{Fehr2014_OcclusionAlleviation} even equipped a robot with a sensor capable of searching for better points of view to facilitate object classification in occluded environments.

In this context of the unknown uncertainty, \cite{Janson2018_OptimalityBenchmarks} proposes a planning trajectory when the obstacle state is a-priori unknown. This work considers the occlusions by static obstacles, and not dynamic ones, that could appear from behind the vehicle for example.
Considering the behavior of other vehicles and of humans in autonomous transportation, \cite{Orzechowski2018_SetBasedSafety} calculated the reachability sets from an occluded area in order to avoid collisions. This work takes into consideration that all cars obey traffic laws and therefore cannot be generalized to all maps.

\textbf{Computation Efficiency and Iterative Optimization: }Another challenge in designing algorithms for robot navigation is ensuring real-time performance. In particular, using the MPC method in control design requires solving the MPC optimization problem in real time. Imposing collision avoidance constraints or considering a nonlinear dynamic model \cite{Firoozi2018} can lead to a nonlinear optimization formulation, which can be computationally prohibitive in some cases \cite{FIROOZI2021104714}. In such cases, additional reformulation is required to develop a framework suitable for real-time applications. Alternation methods in optimization such as the Alternating Direction Method of Multipliers \cite{Boyd2011ADMM} are techniques where optimization problems are solved by alternating between different variables or subsets of variables iteratively. These methods can reduce the computation time by iteratively solving smaller optimization problems. Particularly for robotics applications and the use of MPC, \cite{Firoozi2024} shows alternation between the NMPC optimization and collision avoidance optimization in an iterative manner can significantly improve the computation time and ensure real-time performance. Similar to \cite{Firoozi2024}, we have shown real-time performance of our proposed NMPC framework using iterative optimization.
\section{Notation}
\label{SEC3}
The required definitions and notations are provided below.\\
\emph{MPC Scheme:} MPC is useful for online motion planning among dynamic obstacles because it is able to re-plan according to the newly available sensory data. MPC relies on the receding-horizon principle. At each time step it solves a constrained optimization problem and obtains a sequence of optimal control inputs that minimize a desired cost function $l$, while considering dynamic, state, and input constraints, over a fixed time horizon. Then, the controller applies, in a closed loop, the first control-input solution. At the next time step, the procedure is repeated. In this paper, $\mathbf{u}_{0:N-1}$ and $\mathbf{z}_{0:N}$ indicate the input and state values along the entire planning horizon $N$, predicted based on the measurements at time $t$. For example $[\mathbf{z}_{0|t}$ $\mathbf{z}_{1|t}$,$...,\mathbf{z}_{N|t}]$ represents the entire state trajectory along the horizon $N$ predicted at time $t$ (referred to as the open-loop trajectory). Throughout this paper $\tau$ represents the absolute time, $t$ is the MPC execution time and $k$ is the open-loop prediction step for a given time $t$. The static obstacles are denoted as $\mathcal{O}$ and dynamic agents' reachable sets are represented as $\Omega$. So the unsafe set is defined as the union of these two sets $\Omega \cup \mathcal{O}$. The safe set $\mathcal{Z}^s$ is represented as the complement of the unsafe set $\mathcal{Z}^s = (\Omega \cup \mathcal{O})^c$. Superscript $c$ denotes the complement of the set.\\
% The MPC terminal set is denoted as $\mathcal{Z}_f$.\\
\emph{Occluded Area:} As the robot navigates in the environment, the occluded regions are identified using a LiDAR or depth camera sensor. The boundary of the occluded area can be used to perform reachability analysis for hidden occluded agents. Similarly forward reachability analysis can be performed for visible agents, in case no prediction knowledge is assumed regarding their future actions. \\ 
\emph{Forward Reachable Sets:} The potential locations of dynamic agents (visible or invisible) in the environment are captured using forward reachability analysis. In general, a forward reachable set $\Omega_N$ is defined as the set of states that are reachable after $N$ time steps by a dynamical system $z_{k+1} = f(z_k,u_k)$ from a given set of initial states $\Omega_0$ by applying an admissible sequence control input $u_k \in \mathcal{U}$, specifically 
% \begin{align}
$\Omega_N = \{z_N : \exists u_k \in \mathcal{U} \,\, \forall k = 0, \ldots, N-1,
% \nonumber{}
\quad \text{s.t.} \,\, z_{k+1} = f(z_k,u_k),
z_0 \in \Omega_0\}$.
% \nonumber$
% \end{align}
It follows from this definition that\footnote{Proof: Consider $z_k \in \Omega'_k$, a sequence of inputs $(u_k, \ldots, u_{N-1})$, and the resulting reachable point $z_N \in \Omega'_N$.  Since $\Omega'_k \subset \Omega_k$, we know $z_k \in \Omega_k$.  Since $z_N$ is reachable through the same set of control inputs, $z_N \in \Omega_N$.  This is true for any $z_N \in \Omega'_N$, so $\Omega'_N \subset \Omega_N$.} 
\begin{equation}
    \label{Eq:NestedReachableSets}
 \Omega'_k\subset \Omega_k \implies \Omega'_N \subset \Omega_N \,\, \text{for all} \,\, N \ge k.
\end{equation}
% \emph{Occluded Area:} As the robot navigates in the environment, the occluded regions are identified using LiDAR sensor. In Figure \ref{fig:capsules_sets_OB} the robot (magenta circle) is navigating in an environment where both hidden and visible agents exist. The obstacles (blue regions) cause occlusion. The boundary of occluded area (solid red lines) can be used to perform reachability analysis for hidden occluded agents. In Similar forward reachability analysis can be performed for visible agents, in case that no prediction knowledge is assumed regarding their future actions. The forward reachable sets for visible agent are shown as gray circles. 
% \begin{figure}[ht]
% \centering
% \includegraphics[width=0.3\textwidth]{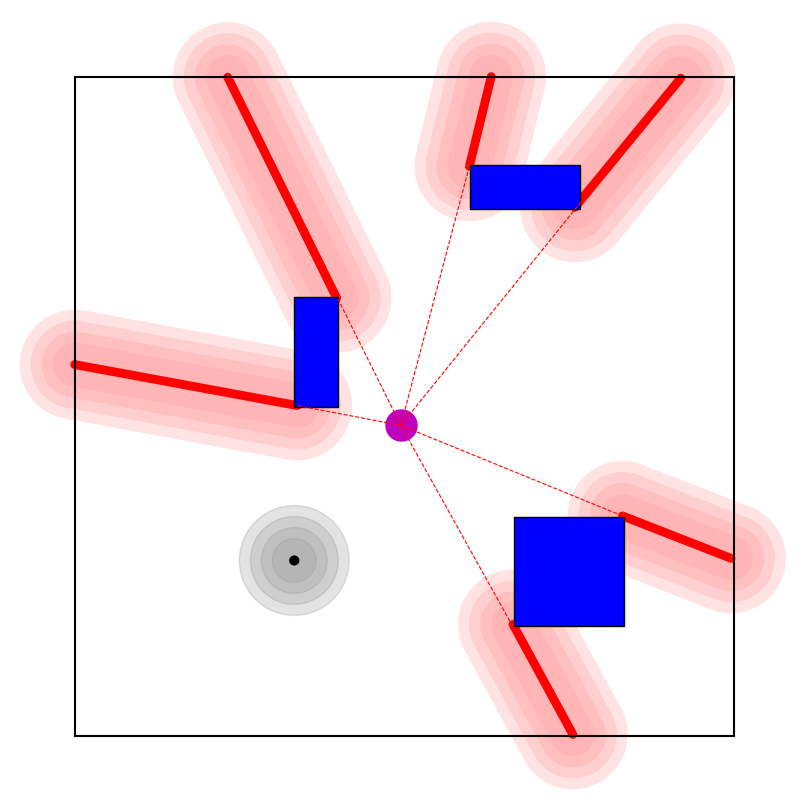}
% \caption{Occlusion boundaries are shown as solid red lines. Reachable sets for occluded agents are shown in shades of red and reachable sets for visible dynamic agent are shown in gray.}
% \label{fig:capsules_sets_OB}
% \end{figure}

\section{Problem Formulation}
\label{SEC4}
Safe navigation in a dynamic occluded environment can be formulated as an NMPC optimization problem that computes collision-free trajectories in real time. The MPC optimization problem is formulated as follows 
\begin{subequations}\label{eq:MPC_formulation}
\begin{align}
& \min_{\substack{\mathbf{u}_{t:t+N-1}}}
% & \min_{\substack{(u_{0|t},...,u_{N-1|t})}}
\label{eq:total_cost}
& &\sum _{k = 0}^{N}l(\mathbf{z}_{k|t},\mathbf{u}_{k|t}) \\
\label{eq:dynamic_constraint}
&\textrm{subject to} & & \mathbf{z}_{k+1|t} = f(\mathbf{z}_{k|t},\mathbf{u}_{k|t}),\\
\label{eq:initial_cond}
& & & \mathbf{z}_{0|t} = \mathbf{z}(t),\\
\label{eq:state_input_bound}
& & & \mathbf{z}_{k|t} \in \mathcal{Z}, \ \mathbf{u}_{k|t} \in \mathcal{U},\\
% \label{CA_static_env}
% & & & \mathcal{C}(\mathbf{z}_{k|t})\cap \mathcal{O}^b_{k|t} = \emptyset, \ b \in \mathcal{O},\\
% \label{obstacle_avoidance_constraint}
% & & & \mathcal{C}(\mathbf{z}_{k|t})\cap \Omega^r_{k|t} = \emptyset, \ r \in \mathcal{S},\\
\label{eq:complementarity_constraint}
& & & \|\mathbf{z}_{k|t}-\mathbf{z}_{k-1|t}\|g_t(\mathbf{z}_{k|t}) \le 0,\\
\label{eq:final_constraint}
& & & {\mathbf{z}}_{N|t} = \mathbf{z}_{N-1|t},\\
& & & \notag \nonumber{\textrm{and~}  k \in \{0,1,2,..,N\}},
\end{align} \end{subequations}
where $\mathbf{u}_{t:t+N-1} = [u_{0|t},...,u_{N-1|t}]$ denotes the sequence of control inputs over the MPC planning horizon $N$. The robot state and input variables $\mathbf{z}_{k|t}$ and $\mathbf{u}_{k|t}$ at step $k$ are predicted at time $t$. The function $f(\cdot)$ in \eqref{eq:dynamic_constraint} represents the nonlinear (dynamic or kinematic) model of the robot, which is discretized using Euler discretization. $\mathcal{Z}$, $\mathcal{U}$ are the state and input feasible sets, respectively. These sets represent state and actuator limitations. Constraint \eqref{eq:complementarity_constraint} is the complementarity constraint in which $g_t(\mathbf{z}_{k|t}) \le 0$ represents all the collision avoidance constraints. The time-varying safe set $\mathcal{Z}^s_{k|t}$ is defined as $\mathcal{Z}^s_{k|t} = \{ {g_t(\mathbf{z}_{k|t}) \leq 0, \ \forall \mathbf{z}_{k|t}}\}$. The complementarity constraint \eqref{eq:complementarity_constraint} formulation allows for $g_t(\mathbf{z}_{k|t})$ to be greater than $0$ (the collision avoidance constraint is violated) as long as $\mathbf{z}_{k|t} = \mathbf{z}_{k-1|t}$ (the robot is stopped). As an example consider a scenario where a pedestrian hits a stopped car, the collision avoidance constraint is violated but since the car is stopped, we consider the plan safe.  The idea is that if the pedestrian hits a stopped car, likely there will not be a major injury, and the pedestrian bears the fault in the collision, not the car. Constraint \eqref{eq:final_constraint} is the terminal constraint that is required to guarantee recursive feasibility. Details are discussed in the next section. The collision avoidance constraint $g_t(\mathbf{z}_{k|t}) \le 0$ includes static and dynamic constraint as follows: 
\begin{subequations}
\begin{align}
\label{CA_static_env}
& \mathcal{C}(\mathbf{z}_{k|t})\cap \mathcal{O}^b = \emptyset, \ b \in \mathcal{B},\\
\label{obstacle_avoidance_constraint}
& \mathcal{C}(\mathbf{z}_{k|t})\cap \Omega^r_{k|t} = \emptyset, \ r \in \mathcal{S}.
\end{align}
\end{subequations}
Constraints \eqref{CA_static_env} are collision-avoidance constraints between the robot $\mathcal{C}$ (modeled as a circle) and the static environment denoted by set $\mathcal{B}$ indexed by $b$.  Constraints~\eqref{obstacle_avoidance_constraint} represent the collision-avoidance constraints between the robot $\mathcal{C}$ and hidden and visible dynamic obstacles in the scene denoted by the set $\mathcal{S}$ indexed by $r$. 
% This representation is time-varying and is a function of the robot state at each time step. 
% The complementarity constraint \eqref{eq:complementarity_constraint} defined as $\|\mathbf{z}_{k|t}-\mathbf{z}_{k-1|t}\|g(\mathbf{z}_{k|t}) \le 0,$ where $g(\mathbf{z}_{k|t}) \le 0$ represents all the collision avoidance constraints.
% % the terminal state constraint as $\mathbf{z}_{N|t} = \mathbf{z}_{N-1|t}$. 
% (This formulation allows for $g(\mathbf{z}_{k|t})$ to be greater than $0$ as long as $\mathbf{z}_{k|t} = \mathbf{z}_{k-1|t}$ (the robot is stopped)). \\
% Note that the NMPC problem \eqref{eq:MPC_formulation}, might get infeasible. However, persistent feasibility of \eqref{eq:MPC_formulation} can be guaranteed by computing the reachable set. Guarantees of recursive feasibility of MPC is discussed in the next section.
% the terminal state constraint as $\mathbf{z}_{N|t} = \mathbf{z}_{N-1|t}$.
\section{Recursive Feasibility Guarantees}
\label{SEC5}
\begin{assumption}
\label{As:Sensor}
We assume the robot's sensor at each time $t$ returns a set $\bar{\Omega}^r_{0|t}$ that contains agent $r$ at time $t$.  If $r$ is not occluded, this set describes the region implied by the sensor measurement (e.g., a disk with some error radius).  If $r$ is occluded, this set is the occlusion region itself. If a sensor measurement is not taken at time $t$, $\bar{\Omega}^r_{0|t}$ is the entire admissible space for the agent.\\
\end{assumption}
Before we prove the main results of recursive feasibility, we state and prove a lemma concerning the evolution of the safe set for the robot. 
\begin{lemma}\label{lemma1}
The predicted safe set at absolute time $\tau$ is non-decreasing as $t$ increases, that is, 
\begin{equation}
\mathcal{Z}^s_{(\tau-t_1)|t_1}\subset \mathcal{Z}^s_{(\tau-t_2)|t_2} \quad \forall t_1 \le t_2 \le \tau.
\end{equation}

\begin{proof}We only have to show that the forward reachable set of dynamic agents is non-increasing,
\begin{equation}
    \Omega^r_{(\tau-t_2)|t_2} \subset \Omega^r_{(\tau-t_1)|t_1} \quad \forall t_1 \le t_2 \le \tau \quad \forall r \in \mathcal{S},
\end{equation}
since these are the only dynamic components in $\mathcal{Z}^s$. Consider a forward reachable set $\Omega^r_{(\tau-t)|t}$ predicted for absolute time $\tau$ based on an initial set $\Omega^r_{0|t}$. Now consider a sensor measurement at time $t+1$ that localizes the agent to a set $\bar{\Omega}^r_{0|(t+1)}$ based on Assumption~\ref{As:Sensor}.  We know that the agent lies both in the set $\bar{\Omega}^r_{0|(t+1)}$ from the sensor, and in the one-step reachable set $\Omega^r_{1|t}$ computed from the previous time step.  Therefore, at time step $(t+1)$ the agent is in $\Omega^r_{0|(t+1)} = \bar{\Omega}^r_{0|(t+1)} \cap \Omega^r_{1|t} \subset \Omega^r_{1|t}$.  By the nested property of reachable sets in (\ref{Eq:NestedReachableSets}) it follows that $\Omega^r_{(\tau-(t+1))|(t+1)}\subset \Omega^r_{(\tau-t)|t}$ for all $t \le \tau$.  Applying mathematical induction on $t$ proves the desired result. 
\end{proof}
\end{lemma}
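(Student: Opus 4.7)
The plan is to reduce the claim about safe sets to a claim about the reachable sets of dynamic agents. Since $\mathcal{Z}^s_{k|t} = (\mathcal{O} \cup \bigcup_r \Omega^r_{k|t})^c$ and the static obstacles $\mathcal{O}$ are independent of the execution time $t$, it suffices to prove the dual containment $\Omega^r_{(\tau-t_2)|t_2} \subset \Omega^r_{(\tau-t_1)|t_1}$ for every agent $r \in \mathcal{S}$ and all $t_1 \le t_2 \le \tau$; passing to complements then yields the stated result.

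Next I would prove the one-step version, i.e.\ the case $t_2 = t_1 + 1$, because the multi-step statement follows immediately by induction. Fix $t$ and an agent $r$. The prediction made at time $t$ asserts that at absolute time $t+1$ the agent lies in the one-step reachable set $\Omega^r_{1|t}$. At time $t+1$ the sensor returns a new set $\bar{\Omega}^r_{0|(t+1)}$ which, by Assumption~\ref{As:Sensor}, also contains the true state of the agent. Both constraints must hold simultaneously, so the sharpest initial set available for the prediction launched at $t+1$ is the intersection
\begin{equation}
\Omega^r_{0|(t+1)} \;=\; \bar{\Omega}^r_{0|(t+1)} \cap \Omega^r_{1|t} \;\subset\; \Omega^r_{1|t}.
\end{equation}
Feeding this into the nested-set property \eqref{Eq:NestedReachableSets} with $k = 1$ gives $\Omega^r_{N|(t+1)} \subset \Omega^r_{(N+1)|t}$ for every $N \ge 0$. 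Choosing $N = \tau - (t+1)$ produces $\Omega^r_{(\tau-(t+1))|(t+1)} \subset \Omega^r_{(\tau - t)|t}$, which is the one-step version of the claim.

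Finally, a straightforward induction on $t_2 - t_1$ chains these single-step containments into the general containment $\Omega^r_{(\tau - t_2)|t_2} \subset \Omega^r_{(\tau - t_1)|t_1}$ for all $t_1 \le t_2 \le \tau$, and taking complements after unioning with $\mathcal{O}$ concludes. The main conceptual obstacle is the step where the new sensor set is combined with the prior reachable set: it is tempting to simply replace the old initial set by the fresh measurement, but the sensor set on its own need not be a subset of $\Omega^r_{1|t}$, so the needed monotonicity would fail. Intersecting is what makes the argument go through, and this is justified only because Assumption~\ref{As:Sensor} guarantees that $\bar{\Omega}^r_{0|(t+1)}$ is an over-approximation containing the true agent state, so the intersection still contains it. Once this step is in place, the rest of the proof is essentially bookkeeping on top of the reachable-set monotonicity in \eqref{Eq:NestedReachableSets}.
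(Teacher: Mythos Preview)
Your proposal is correct and follows essentially the same argument as the paper: reduce to monotonicity of the agents' reachable sets, establish the one-step inclusion via the intersection $\Omega^r_{0|(t+1)} = \bar{\Omega}^r_{0|(t+1)} \cap \Omega^r_{1|t} \subset \Omega^r_{1|t}$ justified by Assumption~\ref{As:Sensor}, invoke \eqref{Eq:NestedReachableSets}, and finish by induction. Your write-up is in fact slightly more explicit than the paper's (you spell out the complement/union structure of $\mathcal{Z}^s$ and flag why intersection rather than replacement is essential), but the route is the same.
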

As an illustration of the concept formalized in Lemma~\ref{lemma1},  Fig.~\ref{fig:shrinking_set} depicts the forward reachable sets as circular shapes. The black circles represent the forward reachable sets at time step $t=0$ and the red circles show the forward reachable sets at the next time step $t=1$. By comparing the sets in absolute time $\tau = t+k$, the reachable set at prediction step $k=1$ at execution time step $t=1$ (shown in solid red) is always contained in the reachable set at prediction step $k=2$ at execution time step $t=0$ (shown in solid gray), so $\Omega^r_{\tau|t=1} \subset \Omega^r_{\tau|t=0}$. Therefore, the safe set $\mathcal{Z}^s_{\tau|t=0}$ at time $t = 0$ is always contained in the safe set at time step $t=1$. At time $t=0$, the safe set is $\mathcal{Z}^s_{\tau|t=0}$
$=(\Omega^r_{\tau|t=0} \cup \mathcal{O}^b)^c$ and at time $t=1$, the safe set is $\mathcal{Z}^s_{\tau|t=1}$
$=(\Omega^r_{\tau|t=0} \cap \Omega^r_{\tau|t=1} \cup \mathcal{O}^b )^c$. So $\mathcal{Z}^s$ is always non-decreasing.
% Therefore, the safe set $\mathcal{Z}_s$ in (\ref{eq:feasible_set_prev}) is always contained in the safe set in (\ref{eq:feasible_set_next}), so $\mathcal{Z}_s$ is always expanding.

We now state our main theoretical result.

\begin{theorem}
\label{theorem1}
If a feasible solution exists for the NMPC optimization  \eqref{eq:MPC_formulation} at time $t=0$, a feasible solution exists for all $t>0$, that is, the proposed NMPC framework is recursively feasible. 
\end{theorem}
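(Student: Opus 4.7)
My plan is to prove recursive feasibility via the standard MPC shift-and-append construction, where Lemma~\ref{lemma1} is the essential ingredient that makes the argument work in the occluded, time-varying setting. Given any feasible solution $\{\mathbf{u}^*_{k|t}\}_{k=0}^{N-1}$ with corresponding states $\{\mathbf{z}^*_{k|t}\}_{k=0}^{N}$ at time $t$, I would construct a candidate for time $t+1$ by shifting the previous inputs forward by one step and reusing the final input as a stopping action.

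Concretely, I would set $\mathbf{u}_{k|t+1} = \mathbf{u}^*_{k+1|t}$ for $k = 0,\dots,N-2$ and $\mathbf{u}_{N-1|t+1} = \mathbf{u}^*_{N-1|t}$. By induction on \eqref{eq:dynamic_constraint}, the resulting state trajectory satisfies $\mathbf{z}_{k|t+1} = \mathbf{z}^*_{k+1|t}$ for $k = 0,\dots,N-1$. For the terminal step, the old terminal equality \eqref{eq:final_constraint} combined with the dynamics at time $t$ gives $f(\mathbf{z}^*_{N-1|t},\mathbf{u}^*_{N-1|t}) = \mathbf{z}^*_{N-1|t}$, and since $\mathbf{z}^*_{N|t} = \mathbf{z}^*_{N-1|t}$, this yields $\mathbf{z}_{N|t+1} = f(\mathbf{z}^*_{N|t},\mathbf{u}^*_{N-1|t}) = \mathbf{z}^*_{N|t} = \mathbf{z}_{N-1|t+1}$. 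The dynamics \eqref{eq:dynamic_constraint}, initial condition \eqref{eq:initial_cond}, state/input bounds \eqref{eq:state_input_bound}, and new terminal constraint \eqref{eq:final_constraint} are then all immediate from this construction.

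The main obstacle is verifying the complementarity constraint \eqref{eq:complementarity_constraint} against the updated safe set $\mathcal{Z}^s_{k|t+1}$, and this is where Lemma~\ref{lemma1} does the real work. For each shifted step $k \in \{1,\dots,N-1\}$, either $\|\mathbf{z}^*_{k+1|t} - \mathbf{z}^*_{k|t}\| = 0$, in which case \eqref{eq:complementarity_constraint} holds trivially at time $t+1$ regardless of $g_{t+1}$; or the original complementarity constraint at time $t$ forced $\mathbf{z}^*_{k+1|t} \in \mathcal{Z}^s_{k+1|t}$. In the latter case, applying Lemma~\ref{lemma1} with $\tau = t+k+1$, $t_1 = t$, and $t_2 = t+1$ gives $\mathcal{Z}^s_{k+1|t} \subset \mathcal{Z}^s_{k|t+1}$, so $\mathbf{z}_{k|t+1} = \mathbf{z}^*_{k+1|t}$ is still safe at time $t+1$. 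For the terminal step $k = N$, the equality $\mathbf{z}_{N|t+1} = \mathbf{z}_{N-1|t+1}$ makes \eqref{eq:complementarity_constraint} trivially satisfied. All constraints at $t+1$ therefore hold, and induction on $t$ completes the argument.

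The subtlest point that deserves care in the write-up is the existence of a stopping input for the terminal state; the construction above sidesteps a separate existence argument by reusing $\mathbf{u}^*_{N-1|t}$ itself, which is admissible by previous feasibility and is already known to fix the state thanks to the combination of \eqref{eq:dynamic_constraint} and the old \eqref{eq:final_constraint}. I expect that making this reuse explicit, together with a careful bookkeeping of the $(\tau,k,t)$ indices when invoking Lemma~\ref{lemma1}, will be the only non-mechanical aspect of the full proof.
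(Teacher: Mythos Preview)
Your proposal is correct and follows essentially the same shift-and-append construction as the paper's own proof: repeat the last input, invoke Lemma~\ref{lemma1} for the shifted steps, and use the terminal stopping equality to render the complementarity constraint vacuous at the final step. If anything, your argument is more careful than the paper's in explicitly deriving $f(\mathbf{z}^*_{N-1|t},\mathbf{u}^*_{N-1|t}) = \mathbf{z}^*_{N-1|t}$ from the combination of \eqref{eq:dynamic_constraint} and \eqref{eq:final_constraint} and in splitting the complementarity verification into the stopped/non-stopped cases.
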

\begin{proof}
Consider the NMPC problem \eqref{eq:MPC_formulation} is solved at time $t=0$.
% \begin{subequations}\label{eq:proof_MPC_t0}
% \begin{align}
% & \min_{\substack{\mathbf{u}_{0:N-1}}}
% & & \nonumber{\sum _{\tau = 0}^{N} l(\mathbf{z}_{\tau},\mathbf{u}_{\tau})} \\
% &\textrm{subject to} & & \nonumber{ \eqref{eq:dynamic_constraint}, \ \eqref{eq:state_input_bound}, \ \mathbf{z}_{0|t=0} = \mathbf{z}(t=0)},\\
% & & & \|\mathbf{z}_{k|t}-\mathbf{z}_{k-1|t}\|g_{\tau}(\mathbf{z}_{k|t}) \le 0,\\
% & & & {\mathbf{z}}_{N|t=0} = \mathbf{z}_{N-1|t=0},\\
% & & & \mathbf{z}_{k|t=0} \in  (\Omega_{\tau|t=0})^c \cup \mathbf{z}_{k-1|t=0}, \label{eq:feasible_set_prev} \\
% & & & \notag \nonumber \tau = t+k, \ {\textrm{and~}  k \in \{0,1,2,..,N\}}.
% \end{align} \end{subequations}
Assume feasibility of $z_0$ and let $[u_{0|0}^*, u_{1|0}^*,...,u_{N-1|0}^*]$ be the optimal feasible control sequence computed at $z_0$ and $[z_0,z_{1|0},...,z_{N|0}]$ be the corresponding feasible state trajectory (which are assumed to exist at $t=0$). Apply $u_{0|0}^*$ and let the system evolve to $z_1 = f(z_0, u_{0|0}^*)$. At $z_1$ consider the control sequence $[u_{1|0}^*,u_{2|0}^*,...,u_{N-1|0}^*, u_{N-1|0}^*]$ for the consecutive MPC problem at time $t=1$.
% \begin{subequations}\label{eq:proof_MPC_t1}
% \begin{align}
% & \min_{\substack{\mathbf{u}_{1:N}}}
% & & \nonumber{\sum _{\tau = 1}^{N+1} l(\mathbf{z}_{\tau},\mathbf{u}_{\tau})} \\
% &\textrm{subject to} & & \nonumber{ \eqref{eq:dynamic_constraint}, \ \eqref{eq:state_input_bound}, \ \mathbf{z}_{0|t=1} = \mathbf{z}(t=1)},\\
% & & & \|\mathbf{z}_{k|t}-\mathbf{z}_{k-1|t}\|g_{\tau}(\mathbf{z}_{k|t}) \le 0,\\
% & & & {\mathbf{z}}_{N+1|t=1} = \mathbf{z}_{N|t=1},\\
% & & & \mathbf{z}_{k|t=1} \in (\Omega_{\tau|t=0} \cap \Omega_{\tau|t=1} )^c \cup \mathbf{z}_{k-1|t=1}, \label{eq:feasible_set_next} \\
% & & & \notag \nonumber \tau = t+k, \ {\textrm{and~}  k \in \{0,1,2,..,N\}}.
% \end{align} \end{subequations}
Since the time-varying state safe set $\mathcal{Z}^s$ is non-decreasing (according to Lemma \ref{lemma1}) and the input constraints do not change, the state and input constraints are satisfied for all times except for potentially at state $z_{N|1}$.  However, applying $u_{N-1|1} = u_{N-1|0}^*$ at the last step keeps the state at the previous state $z_{N|1} = z_{N-1|1}$. At time $t=1$ in case the collision-avoidance constraint is violated at the last prediction time step $g(\mathbf{z}_{k=N|t=1}) > 0 $, the fact that the robot is stopped $z_{N|1} = z_{N-1|1}$ ensures the robot will not violate the complementarity constraint at the last step, so the solution remains feasible.  Applying this reasoning in a mathematical induction for all subsequent time steps yields the result stated in the theorem.  
% Since the time-varying state safe set $\mathcal{Z}_s$ is always expanding (according to Lemma \ref{lemma1}) and applying $u = u_{N-1}^*$ at the last step keeps the state at the previous state $z_{N+1} = z_{N}$.   
\begin{figure}
\centering 
\includegraphics[width=1\columnwidth]{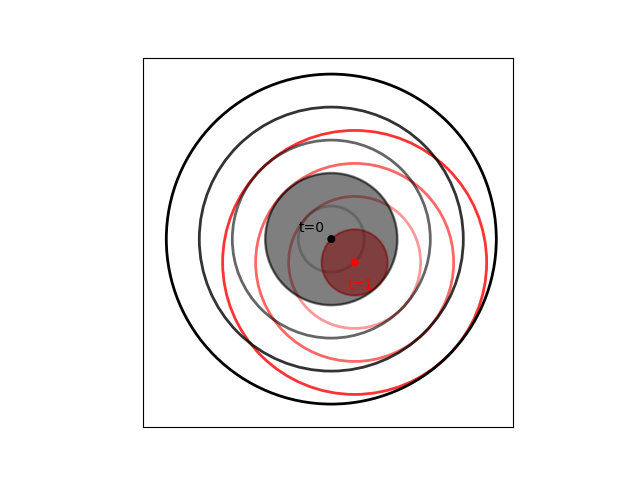}
	\caption{Forward reachable sets of unsafe region $\Omega$ are shown for two consecutive time steps $t=0$ and $t=1$ in the simple case of a single pedestrian based on a top speed bound.} 
	\label{fig:shrinking_set}
\end{figure}
\end{proof}
While the result applies generally to all problems that fall under our assumptions, in the rest of this paper we consider a specific setup in which the sensor is a LiDAR sensor, and the agents are assumed to be able to move in any direction with a known speed bound, leading to a simple geometric computation of reachable sets.  We develop a practical algorithm based on (\ref{eq:MPC_formulation}) for this setup.
\section{Iterative Optimization Scheme}
\label{SEC6}
\subsection{NMPC Optimization}
The following optimization represents the nonlinear MPC scheme. The collision-avoidance with static obstacles in the environment $\mathcal{O}^b$ represented in \eqref{CA_static_env} are performed based on point-cloud data received from LiDAR. (Other sensors and methods can be used for collision detection and avoidance as well). Figure \ref{fig:CA_PCL} depicts a LiDAR-based collision avoidance scheme, where black dots are point clouds and blue areas are obstacles. Point clouds are down-sampled and green circles are centered at the sampled point clouds. Circles' radius should be selected to ensure full coverage of the visible portion of the obstacle. The safe distance between the robot (magenta circle) and each green circle is incorporated as a constraint in the NMPC optimization
\begin{subequations}\label{eq:NMPC}
\begin{align}
& \min_{\substack{\mathbf{u}_{t:t+N-1}}}
& &\eqref{eq:total_cost} \\
&\textrm{subject to} & & \eqref{eq:dynamic_constraint}, \eqref{eq:initial_cond},\eqref{eq:state_input_bound}, \eqref{CA_static_env}, \ \eqref{eq:complementarity_constraint}, \ \eqref{eq:final_constraint}\\
\label{obstacle_avoidance_constraint2}
& & & \textrm{dist}(\mathcal{C}(\mathbf{z}_{k|t}), \bar{\mathbf{z}}_{\textrm{proj}{k|t}}) \geq d_{\textrm{safe}}, \ r \in \mathcal{S},\\
& & & \notag \nonumber{\textrm{and~}  k \in \{0,1,2,..,N\}},
\end{align} \end{subequations}
where $\bar{{\mathbf{z}}}_{\textrm{proj}k|t}^r$ denotes the projected state trajectory computed by solving collision avoidance optimization (The bar notation means the value is known). More details are included in the next section. The computed minimum distance is then constrained to be larger than a safe predefined minimum allowable distance $d_{\textrm{safe}}$. ($d_{\textrm{safe}}$ is a design parameter and should be determined based on the uncertainty quantification of physical models and stochastic measurement errors.)
% \begin{figure}
% \centering
% \begin{minipage}{.5\linewidth}
%   \centering
%   \includegraphics[width=1\linewidth]{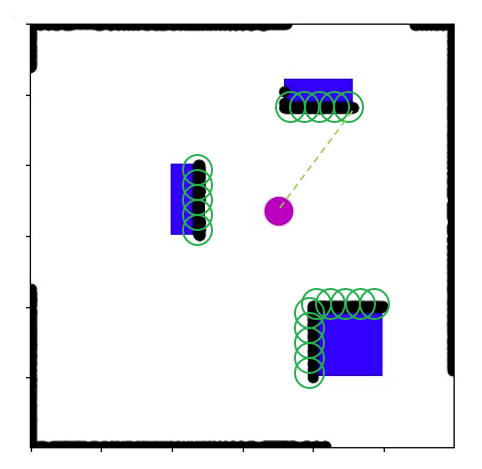}
%   \captionof{figure}{LiDAR-based collision avoidance: blue areas are obstacles and black dots are point clouds. Green circles are centered at down-sampled point-clouds.}
%   \label{fig:CA_PCL}
% \end{minipage}%
% \begin{minipage}{.5\linewidth}
%   \centering
%   \includegraphics[width=1\linewidth]{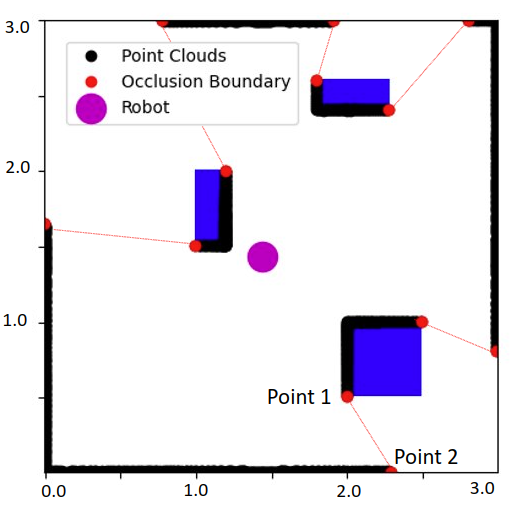}
%   \captionof{figure}{Boundary of occlusion is detected based on the jump in consecutive range values of point 1 and point 2. }
%   \label{fig:Lidar_PCL}
% \end{minipage}
% \end{figure}
% \begin{figure}
% \centering
% \includegraphics[width=\linewidth]{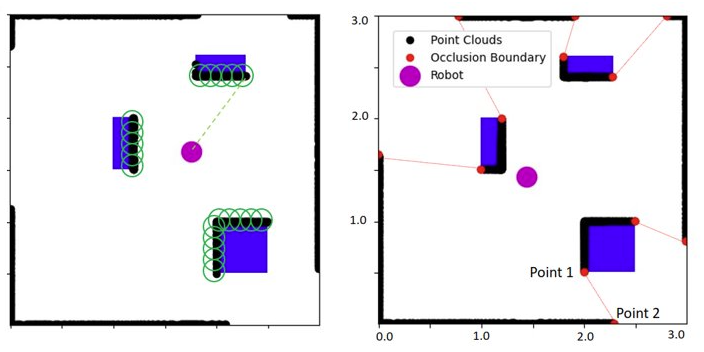}
% \caption{\textbf{Left: }LiDAR-based collision avoidance: Blue areas are obstacles; Black dots are point clouds; Green circles are centered at down-sampled point-clouds. \textbf{Right: }Boundary of occlusion is detected based on a jump in range values of two consecutive points 1 and 2.}
% \label{fig:CA_PCL2s}
% \end{figure}

\begin{figure}
\centering
\includegraphics[width=0.7\linewidth]{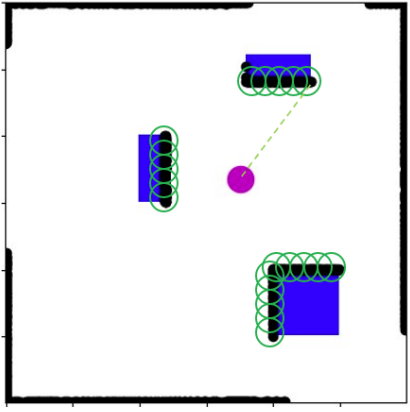}
\caption{LiDAR-based collision avoidance: Blue areas are obstacles; Black dots are point clouds; Green circles are centered at down-sampled point clouds.}
\label{fig:CA_PCL}
\end{figure}

\begin{figure}
\centering
\includegraphics[width=0.8\linewidth]{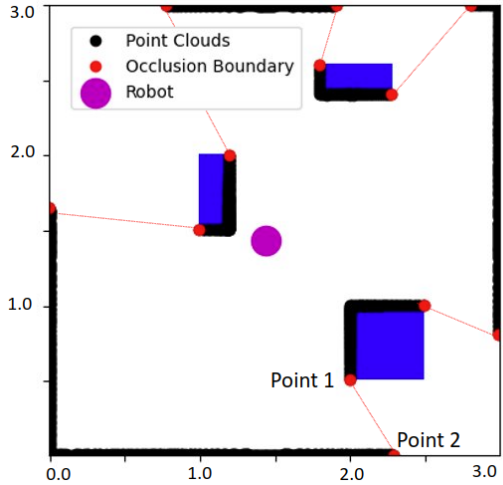}
\caption{Boundary of occlusion is detected based on a jump in range values of two consecutive points 1 and 2.}
\label{fig:Lidar_PCL}
\end{figure}

\subsection{Collision Avoidance Optimization}
\emph{Occlusion detection:} To detect the boundary of occlusion, the consecutive range values obtained from LiDAR are compared, and the ones with a difference larger than a threshold are detected as occlusion boundaries. For example, in Figure \ref{fig:Lidar_PCL}, there is a large jump between the range values of Point 1 and Point 2 (red points), so the LiDAR line-of-sight that passes through those two points indicates a boundary of occlusion. All the other red points in the figure are computed in the same way and their line-of-sight indicates an occlusion boundary. \\
\emph{Reachable set construction:} 
The reachable set represents the potential locations where target agents (pedestrians) could be present.
The initial set for constructing the reachable set is a line segment which is the boundary of occlusion detected by LiDAR. 
%The parts that define the occlusion region are all line segments, so a safe upper-bound on reachable set is created for any target that might present in these occluded regions. 
To consider the worst-case, the pedestrian is assumed to start its movement from the occlusion boundary. No prior knowledge of the agents' dynamic model is assumed and the only assumption is that the target maximum speed $v_{\textrm{target}}$ is known. This upper-bound speed can be specified according to the environment such as driving or in-door settings.
% For example, in a autonomous driving setting, a possible moving target can be another vehicle, so the speed upper-bound is larger compared to robot navigation in indoor environment, where the agents will be pedestrians. Since upper-bound on speed is assumed and the target can move at any direction, the reachable sets will be capsule sets.

Capsules are described as the union of two circles $\mathcal{C}^1$, $\mathcal{C}^2$, and a rectangle $\mathcal{P}$ as shown in Figure \ref{fig:capsule_construction}. Figure \ref{fig:capsules_sets} shows the forward reachable sets that are nested capsules, which grow over time along the MPC horizon. The reachable sets are enlarged at each time step $k$ along the MPC planning horizon. The enlarged capsule set is computed by increasing the radius of the two circles $\mathcal{C}^1$ and $\mathcal{C}^2$ by distance $d_{\textrm{target}} = \frac{v_{\textrm{target}}}{\Delta t}$, which is the maximum distance that a pedestrian can travel in a time step $\Delta t$. The corresponding polytopic set is constructed as shown in Figure \ref{fig:capsule_construction} based on the convex hull of the 4 points (green dots) obtained from circle enlargement. The capsule set construction procedure is summarized in Algorithm \ref{alg_capsule_set}. \\
\begin{figure}
\centering
\includegraphics[width=0.5\columnwidth]{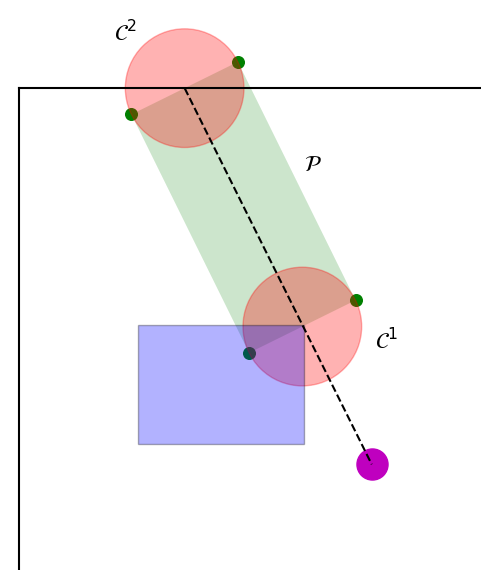}
\caption{The capsule set is constructed by polytope and circles.}
\label{fig:capsule_construction}
\end{figure}
\begin{figure}
\centering
\includegraphics[width=0.5\textwidth]{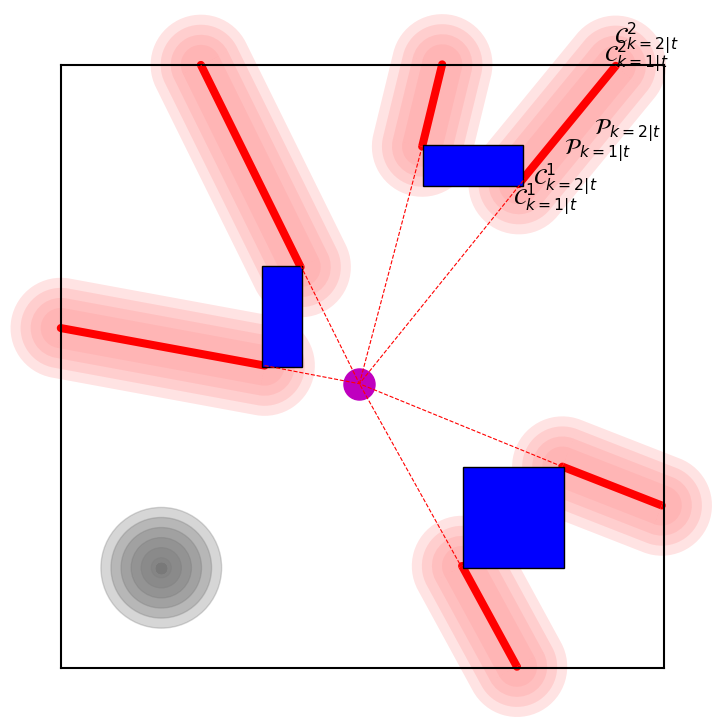}
\caption{Capsule sets are enlarged along the MPC prediction horizon based on the speed upper-bound $v_{\textrm{target}}$ assumed for the invisible agent. The gray circles illustrate forward reachable sets for visible targets. The red dashed lines represent LiDAR rays.} 
\label{fig:capsules_sets}
\end{figure}
\begin{algorithm}[t]
\caption{\small Capsule Set Construction Algorithm}\label{alg_capsule_set}
    \begin{algorithmic}[1]
        \State According to the LiDAR scan of the environment, the jump in the values of two consecutive range measurements indicates the boundary of an occluded region. 
        \State {The circles $\mathcal{C}^1$ and $\mathcal{C}^2$ with radius $d_{\textrm{target}}$ centered at the endpoints of line-of-sight are created and the vertices of the associated polytope are determined according to the circles' radius.}
        \State {Convex hull of polytope vertices is obtained to determine H-representation of the polytope, necessary for collision avoidance optimization.}
    \end{algorithmic}
\end{algorithm}
\emph{Minimum distance from the capsule set:} The robot shape is approximated as a circle with radius $r_{\textrm{robot}}$ and the collision avoidance between the robot and the dynamic agent such as pedestrian is formulated based on the minimum distance between the circle (robot) and the capsules (the pedestrian forward reachable sets). To compute the minimum distance $\textrm{dist}(\mathcal{C}_{\textrm{robot}},\Omega)$, first the robot's center of mass $z$ is projected on to the capsule $ d_{\textrm{proj}} = \textrm{min}\{ d_{\mathcal{C}^1}, d_{\mathcal{C}^2}, d_{\mathcal{P}}\},$
where $d_{\mathcal{C}^1}, d_{\mathcal{C}^2}, d_{\mathcal{P}}$ are distances to the circle $\mathcal{C}^1$ and circle $\mathcal{C}^2$ and the polytope $\mathcal{P}$, respectively. Then the robot radius is subtracted as $\textrm{dist}(\mathcal{C}_{\textrm{robot}}(\mathbf{z}),\Omega) = d_{\textrm{proj}} - r_{\textrm{robot}}$. Distance to the circles $d_{\mathcal{C}^1}$ and $d_{\mathcal{C}^2}$ are computed by calculating the distance to the center of circles and subtracting the radius of the circle. Distance to the polytope $\mathcal{P}$ is computed by solving the optimization problem: $d_{\mathcal{P}} = \underset{\mathbf{y}}{\min}\{\|\bar{\mathbf{z}}-\mathbf{y}\|_{2}|\mathbf{A}\mathbf{y} \leq \mathbf{b}\},$ where $\bar{\mathbf{z}}$ is the robot position, which is known (the bar notation means the value is kept as fixed). The optimal solution of the above optimization problem is denoted as $\mathbf{y}^*$. 
After computing $d_{\textrm{proj}}$, the corresponding projected point $\mathbf{z}_{\textrm{proj}}$ on the capsule set is computed (the projection of robot on the capsule set) by 
\begin{equation} \label{eq:CA}
 \mathbf{z}_{\textrm{proj}} =\begin{cases}
  \mathbf{y}^*  & \textrm{if} \ d_{\textrm{proj}} = d_{\mathcal{P}} \\
(1-\textrm{ratio}_1) \bar{\mathbf{z}} + \textrm{ratio}_1 \mathcal{C}^1_{\textrm{center}}& \textrm{if}\ d_{\textrm{proj}} = d_{\mathcal{C}_1}  \\
(1-\textrm{ratio}_2) \bar{\mathbf{z}} + \textrm{ratio}_2 \mathcal{C}^2_{\textrm{center}}     & \textrm{if}\ d_{\textrm{proj}} = d_{\mathcal{C}^2},
\end{cases}
\end{equation}where $\textrm{ratio}_1 = (d_{\mathcal{C}^1} - r_{\mathcal{C}^1}) / d_{\mathcal{C}^1}$ and $\textrm{ratio}_2 = (d_{\mathcal{C}^2} - r_{\mathcal{C}^2}) / d_{\mathcal{C}^2}$, and $r_{\mathcal{C}^1}$ and $r_{\mathcal{C}^2}$ are radius of the Circle 1 and Circle 2, respectively. In \eqref{eq:CA}, the first equation is the projection of the robot on polytope and the second and third equations represent the projection of the robot on Circle 1 and Circle 2, respectively.
The projection of the robot on the capsule set $\mathbf{z}_{\textrm{proj}}$ is computed by solving \eqref{eq:CA}. This projection step is solved for all the robot predicted states along the MPC horizon $\bar{\mathbf{z}}_{k|t} \ \forall k \in \{1,...,N\}$ (open-loop state trajectory) and for each boundary of occlusion $r \in \mathcal{S}$, in parallel. The projected open-loop state trajectory $\bar{{\mathbf{z}}}_{\textrm{proj}k|t}^r$ is computed and is incorporated into the constraint \eqref{obstacle_avoidance_constraint2} as a fixed known value. To summarize, the optimization \eqref{eq:NMPC} and optimization \eqref{eq:CA} alternate. In the NMPC \eqref{eq:NMPC}, the projected states $\bar{\mathbf{z}}^r_{\textrm{proj}k|t}$ are kept fixed and decision variables of optimization are the robot states along the MPC horizon $\mathbf{z}_{k|t}$. In collision avoidance optimization \eqref{eq:CA}, the robot states are kept fixed $\bar{\mathbf{z}}_{k|t}$ and the projected states ${\mathbf{z}}_{\textrm{proj}k|t}$ are computed.\\ 
Also, to use open-loop projected states in the NMPC in constraint (4c), we need to rely on the projected states from the previous iteration $(t-1)$ of NMPC solutions (open-loop state trajectory). So, before solving \eqref{eq:CA}, the robot state trajectory $\bar{\mathbf{z}}_{k|t}$ is shifted one step forward in time and the last step is {extrapolated} as $[\bar{\mathbf{z}}(2),...,\bar{\mathbf{z}}(N),\bar{\mathbf{z}}(\textrm{Extrap})]$.

Note that the optimization problem for projecting the robot on polytope and finding $d_{\mathcal{P}}$ has a lower bound on the minimum distance which is zero, so in case the point $\mathbf{z}$ is already inside the polytopic set $\mathcal{P}$ the minimum distance is calculated as zero. One alternative solution is to formulate the collision avoidance with capsule sets exactly the same as collision avoidance formulation for static obstacles, in which circles are sampled on the occlusion boundary and the robot distance to these circles is incorporated into the NMPC problem directly as collision avoidance constraint. Using this alternative approach no iterative optimization scheme is required. However, incorporating the constraints directly into the NMPC can slightly affect the computation time. 
\subsection{Iterative Optimization Algorithm}
Problem \eqref{eq:CA} is itself an optimization problem, so imposing it as a constraint of Problem \eqref{eq:MPC_formulation} yields a bilevel optimization that is computationally intensive. To overcome this challenge, by relying on the ability of MPC to generate predicted open-loop trajectories, we devise an alternative optimization algorithm in which we alternate between the two optimizations: NMPC optimization \eqref{eq:NMPC} and collision avoidance optimization \eqref{eq:CA}, as detailed in Algorithm~\ref{alg_alternation}. 
\begin{algorithm}[t]
\caption{\small Iterative Optimization Algorithm}\label{alg_alternation}
    \begin{algorithmic}[1]
        \State Initialize state trajectory $[\mathbf{z}(1),...,\mathbf{z}(N)]$ 
        % based on the current LiDAR measurements,
        \For { $t=0,1,..., \infty$} 
            \State  {Receive LiDAR scan and compute the capsule sets: $[\Omega^r(1),...,\Omega^r(N)]$ for each $r \in \mathcal{S}$}
                \State {Compute the shifted state $[\bar{\mathbf{z}}(2),...,\bar{\mathbf{z}}(N),\bar{\mathbf{z}}(\textrm{Extrap})]$.}
            \State {Solve Problem \eqref{eq:CA} for the shifted state trajectory for each $r \in \mathcal{S}$ in parallel to find projected robot states on the capsule sets $[\mathbf{z}_{\textrm{proj}}(2),..., \mathbf{z}_{\textrm{proj}}(N), \mathbf{z}_{\textrm{proj}}(\textrm{Extrap})]$ for each prediction step of the open-loop trajectory}.
          \State {Solve NMPC Problem~\eqref{eq:NMPC}}
        % \hphantom{zzzzzzz}

        %:        $[\mathbf{s}_{ij}(1),...,\mathbf{s}_{ij}(N)]$, $[\boldsymbol{\lambda}_{ij}(1),...,\boldsymbol{\lambda}_{ij}(N)],$ $[\boldsymbol{\lambda}_{ji}(1),...,\boldsymbol{\lambda}_{ji}(N)].$ 
        \State {Apply $\mathbf{u}_{\text{MPC}}$ to move forward.}
        \EndFor
    \end{algorithmic}
\end{algorithm}
In step \textcircled{\footnotesize{1}} of Algorithm~\ref{alg_alternation}, the sequence of state trajectory is initialized. In step \textcircled{\footnotesize{3}}, a new LiDAR measurement is received and the reachable sets are computed accordingly. In step \textcircled{\footnotesize{4}}, the associated open-loop state trajectory is shifted forward one step in time, and the last step is copied or extrapolated. In step \textcircled{\footnotesize{5}}, the collision avoidance optimization is solved in parallel for each boundary of the occluded area. In step \textcircled{\footnotesize{6}} NMPC problem \eqref{eq:NMPC} is solved and a sequence of open-loop actions is computed. In step \textcircled{\footnotesize{7}}, the first control input $\mathbf{u}_{\textrm{MPC}}$ is applied and the procedure is repeated for the next time step.

\section{Simulations and Hardware Experiments} 
\label{SEC7}
To validate the effectiveness of the proposed approach, both simulation studies and real-world experiments have been performed. The NMPC optimization is modeled using CasADi \cite{andersson_gillis_diehl_2017} in Python. For the experiment, a TurtleBot robot equipped with Velodyne LiDAR (Vlp-16 Puck Lidar Sensor 360 Degree) is used. OptiTrack motion capture system is used for robot localization. The MPC is run on a Lenovo ThinkPad laptop with Ubuntu 20.04 OS with 3.00 GHz Intel CPU Core i7 with 32 GB of RAM.

The Robot dynamic is modeled by a nonlinear kinematic unicycle model $\dot x = v \cos(\psi),\
\dot y  = v \sin(\psi), \
\dot \psi = \delta,$
% \begin{equation}\label{eq:unicycle_model}
% \begin{aligned}
% \dot x & = v \cos(\psi),\
% \dot y  = v \sin(\psi), \
% \dot \psi = \delta,\\
% \end{aligned}
% \end{equation}
where the state vector is $\mathbf{z} = [x,y,\psi]^\top$ ($x$, $y$, and $\psi$ are the longitudinal position, the lateral position, and the heading angle, respectively). The input vector is $\mathbf{u} = [v, \delta]^\top$ ($v$ and $\delta$ are the velocity and the steering angle, respectively). Using Euler discretization, the unicycle model is discretized with sampling time $\Delta t$ as $x(t+1) = x(t) + \Delta t \ v(t) \cos(\psi(t)), \
 y(t+1)  = y(t)+\Delta t \ v(t)  \sin(\psi(t)),\ 
\psi(t+1) = \psi(t)+ \Delta t \ \delta(t).$
% \begin{equation}\label{eq:unicycle_model_discrete}
% \begin{aligned}
% x(t+1) & = x(t) + \Delta t \ v(t) \cos(\psi(t)), \\
% y(t+1) & = y(t)+\Delta t \ v(t)  \sin(\psi(t)), \\
% \psi(t+1) &= \psi(t)+ \Delta t \ \delta(t),\\
% \end{aligned}
% \end{equation}
The cost is defined as $l(\mathbf{z},\mathbf{u})$ $= \sum_{k=t}^{t+N}$ $\|(\mathbf{z}_{k|t}$ $-\mathbf{z}_\textrm{Goal})\|^2_{Q_\mathbf{z}}
$ $+ \sum_{k=t}^{t+N-1}$ $(\|(\mathbf{u}_{k|t})||^2_{Q_\mathbf{u}} $ $+ \|$ $(\Delta \mathbf{u}_{k|t})\|^2_{Q_{\boldsymbol{\Delta u}}})$, where $\Delta \mathbf{u}$ penalizes changes in the input rate.
$Q_\mathbf{z}\succeq 0$, $Q{_\mathbf{u}}, Q_{\boldsymbol{\Delta u}}\succ 0$ are weighting matrices, $\mathbf{z}_\textrm{Goal}$ is the goal location that robot should reach. Throughout the simulations the sampling time $\Delta t$ is 0.1s. Simulation studies are performed for two different scenarios including navigating a corner and navigating a complex environment with multiple obstacles. In both scenarios, the proposed Occlusion-Aware planner (OA-MPC) is compared against baseline MPC which is agnostic towards occlusions and the possible presence of invisible agents.

Figure \ref{fig:corner_plot} illustrates the corner navigation scenario. The closed-loop simulation for both planners (OA-MPC and baseline MPC) is shown.  
\begin{figure}[!htb]
\centering
  \includegraphics[width=1\columnwidth]{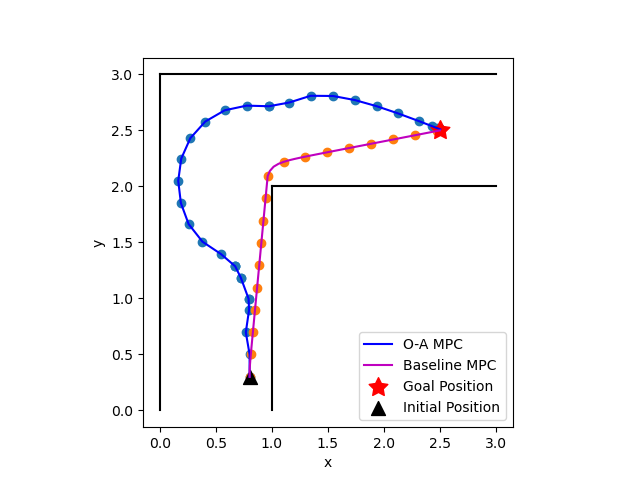}
  \caption{Corner navigation simulation.}
  \label{fig:corner_plot}
\end{figure}
% Closed-loop simulation of navigating corner by assuming the presence of a hidden dynamic agent in the occluded region.
In this simulation, minimum allowable distance $d_{\textrm{safe}}$ is $0.5 m$, the velocity input is bounded within $0$ to $2 m/s$. The steering input is bounded within $\pm\, \pi rad$. The robot shape is defined by a circular shape with radius $r_\textrm{robot} = 0.2 m$. Boundaries of the track shown are defined as the limits on the position states $x$ and $y$. The width of the track is $1m$. The robot's initial condition $\mathbf{z}_0$ is $[x_0, y_0, {\psi}_0]$ = $[0.8 m,0.3 m, \frac{\pi}{2} rad]$. Two goal positions $\mathbf{{z_{goal}}_1}$ as $[x=1,y=2.5]$ and $\mathbf{{z_{goal}}_2}$ as $[x=2.5,y=2.5]$ are specified. The MPC prediction horizon $N $ is $10$. A potential dynamic agent hidden behind the occluded area is assumed to be moving with speed $v_{\textrm{ped}}=0.5 m/s$. As seen, baseline MPC (red) is agnostic towards the occlusion boundary and navigates the corner without considering collision with any possible potential invisible agent that might emerge from the occluded region. Conversely, the OA-MPC planner (blue) slightly turns to the left as it approaches the corner to take a wider turn before turning completely to the right to avoid collision with any potential invisible agent.
% Figure \ref{fig:open-loop} shows MPC open-loop trajectories and how they evolve as the robot avoids the enlarged capsule sets along the MPC horizon.  
% shows the plots the closed-loop state and input trajectories for the same corner navigation scenario. For example as seen, the heading angle $\psi$ changes near the corner as the robot slightly turns to the left before turning completely to the right. The bottom center shows the distance between the robot and the reachable sets (collision avoidance sets) during the closed-loop simulation. (The distance is calculated at MPC execution time $t$ and not the predictions along the horizon $k=0$.

Figure \ref{fig:complex_map_plot_comparison} shows navigation in an environment with multiple obstacles that lead to various occluded regions. Closed-loop simulation for baseline MPC and OA-MPC with different speeds assumed for invisible agents are compared. As seen, the baseline MPC (blue) is the less conservative planner which is close to the obstacles, but the OA-MPC with $v_{\textrm{ped}} = 0.3 m/s$ considers collision avoidance with the possible invisible agent and keeps a larger distance to the obstacles. Also, as the presumed speed upper-bound for the invisible target agent increases to $v_{\textrm{ped}} = 0.5 m/s$ and $v_{\textrm{ped}} = 0.6 m/s$ values (shown in red and light green, respectively), the robot's behavior is more conservative and it keeps larger distance towards the occluded regions, as expected. Computation times for baseline MPC and OA-MPC for the scenario of Figure \ref{fig:complex_map_plot_comparison} are reported in Table \ref{table:1} for over 100 iterations. As seen, the average computation time for OA-MPC is larger than baseline MPC, due to extra constraints associated with reachable sets on occluded boundaries, but OA-MPC is still suitable for real-time implementation.
% \begin{figure}[!htb]
% \centering
%   \includegraphics[width=0.8\linewidth]{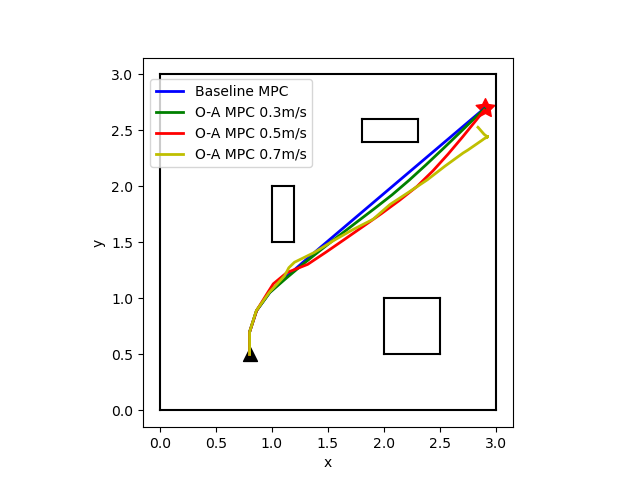}
%   \caption{Navigation in environment with multiple obstacles.}
%   \label{fig:complex_map_plot}
% \end{figure}
% Closed-loop simulation of navigating complex environment with multiple obstacles and therefore more occluded regions.
Figure \ref{fig:OL_traj} shows the enlargement of capsule sets along the MPC horizon. The blue trajectory shows the closed-loop plan from the initial condition $\mathbf{z}_0 = [x_0, y_0, {\psi}_0]$ = $[0.8 m,0.3 m, \frac{\pi}{2} rad]$ to the goal position $\mathbf{{z_{goal}}}$ as $[x=2.9,y=2.8]$. The open-loop trajectory at time step $t=7 s$ is shown as colored dots, where each color represents one step of prediction. For example red dot shows $\mathbf{z}_{k=0|t=7}$, light red is $\mathbf{z}_{k=1|t=7}$, purple is $\mathbf{z}_{k=2|t=7}$, green is $\mathbf{z}_{k=3|t=7}$, yellow is $\mathbf{z}_{k=4|t=7}$, light blue is $\mathbf{z}_{k=5|t=7}$, pink is $\mathbf{z}_{k=6|t=7}$ and so on (only portion of horizon steps are illustrated in the figure for clarity). 
The corresponding capsule sets are shown in the same color. As seen, at each horizon step $k$, the robot satisfies the minimum distance requirement to capsule sets. 
% Note that OA-MPC involves two optimizations, Collision Avoidance (CA) optimization is a simple projection step that is a quadratic problem and can be solved in real-time and in parallel for each occlusion boundary $r \in \mathcal{S}$, but the NMPC optimization computation time can increase in very cluttered environments. One way to decrease the computation time is to include the occlusion boundaries in the robot path and remove the constraints that are not behind the robot and are not required to be included in planning. }

\begin{table}
\centering
\begin{tabular}{ |p{2.5cm}|p{1cm}|p{1cm}|p{1cm}| }
\hline
\multicolumn{4}{|c|}{ Computation Time (Sec)} \\
\hline
Approach & Average & Max & Std \\
\hline
Baseline MPC &0.023& 0.030 & 0.004 \\
\hline
NMPC Optimization  &0.033 & 0.054 & 0.006 \\
CA Optimization  &0.004 & 0.007 & 0.001\\
OA-MPC (Total)  &0.037 &0.061 & 0.007 \\
\hline
\end{tabular}
\caption{Computation time statistics over 100 iterations is compared for baseline MPC and OA-MPC.}
\label{table:1}
\end{table}

\begin{figure}
\centering 
\includegraphics[width=1\linewidth]{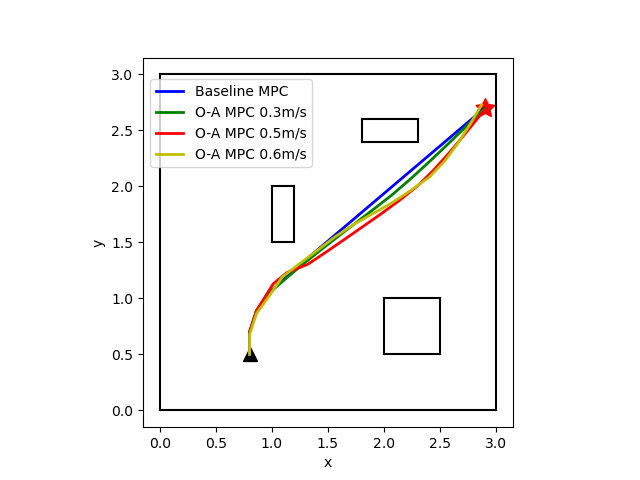}
	\caption{Navigation in the environment with multiple obstacles. The plots compare robot-executed trajectories for various speed upper bounds assumed for the target agent.} 
	\label{fig:complex_map_plot_comparison}
\end{figure}

\begin{figure}
\centering 
\includegraphics[width=0.8\linewidth]{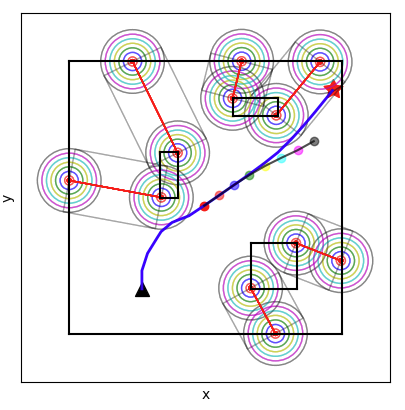}
	\caption{Open-loop MPC trajectories are shown for time step $t=7 s$. Capsule sets corresponding to each predicted step of  MPC are shown in the same color.}
	\label{fig:OL_traj}
\end{figure}

Figure \ref{fig:ped_sim0} shows snapshots of closed-loop simulation of LiDAR and OA-MPC for a scenario in which a robot (magenta circle) navigates in an environment with multiple static (blue regions) and dynamic obstacles or pedestrians (green circles). The speed limit of the robot in this scenario is considered as $10m/s$. The Top (a) figure shows the initial condition in which one pedestrian is visible to the robot and the other is initially hidden in an occluded region. Point clouds from the LiDAR simulation are shown in purple. The boundary of occlusion and capsule reachable sets are shown in red. As the robot navigates the environment to reach its goal (red star), the occlusion boundary changes as the LiDAR scan is updated. As soon as the pedestrian crosses the boundary of occlusion as shown in the Second Top (b) figure, the OA planner creates the pedestrian's reachable sets (nested green circles) and avoids collision with visible agents (green reachable sets) and capsules (red reachable sets) corresponding to occluded area. The Third Top (c) figure shows the open-loop trajectory (black dots) as the robot moves towards its goal. The Bottom (d) figure shows the entire executed trajectory in which the robot reaches its goal while avoiding collision with hidden/visible pedestrians. Note that visible targets can create dynamic occlusion in the environment. This type of occlusion can be handled in a similar way. However, in this simulation, the occlusion caused by a moving visible pedestrian is considered negligible due to the small size of the pedestrian. 
% \begin{figure}
% \centering 
% \includegraphics[width=1\linewidth]{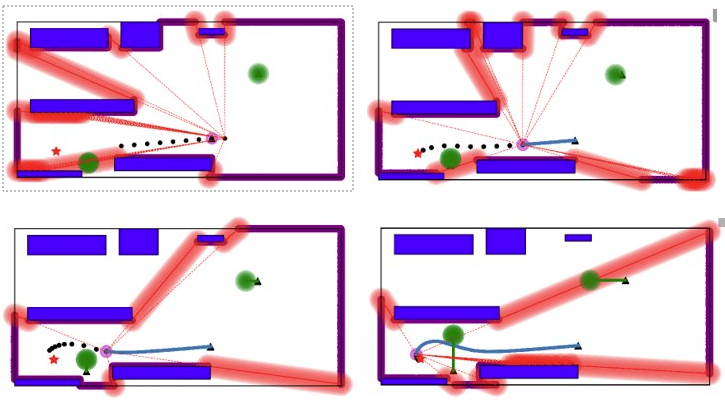}
% 	\caption{\textbf{Top Left: }One of the pedestrians (green circles in the top right) is moving to the left and is visible to the robot. The other pedestrian (green circle in the bottom left region) is initially in the occluded region and she/he starts to move towards the top. \textbf{Top Right: }At the moment that the pedestrian crosses the occlusion boundary, she/he is visible to the robot, and the forward reachable sets (green) are computed to avoid collision with the visible target. \textbf{Bottom Left: }The robot moves towards its goal while avoiding collision with visible/hidden agents, (open-loop trajectories are shown in black). \textbf{Bottom Right: }Finally, the robot reaches its goal, and its executed trajectory is illustrated (blue).} 
% 	\label{fig:ped_sim0}
% \end{figure}

\begin{figure}
\centering
\begin{minipage}{\columnwidth}
    \includegraphics[width=0.95\columnwidth]{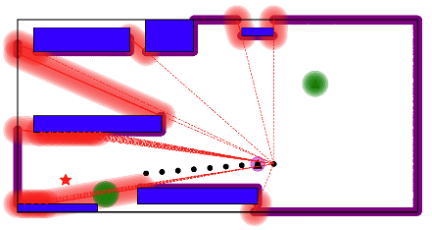}\\
     \vspace{2pt}
    \centering(a)
\includegraphics[width=0.95\columnwidth]{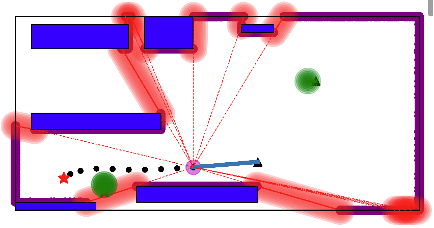}\\
     \vspace{2pt}
    \centering(b)
\includegraphics[width=0.95\columnwidth]{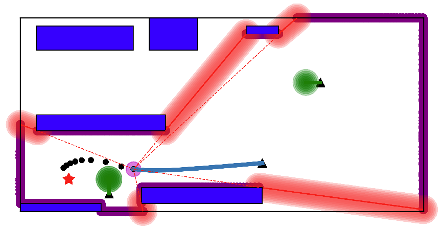}
     \vspace{2pt}
    \centering(c)
\includegraphics[width=0.95\columnwidth]{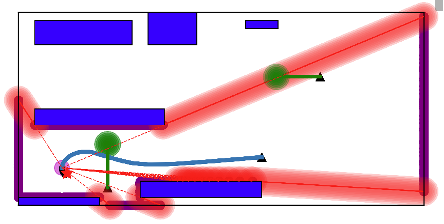}
     \vspace{2pt}
    \centering(d)
\end{minipage}
\caption{\textbf{Top (a): }One of the pedestrians (green circles in the top right) is moving to the left and is visible to the robot. The other pedestrian (green circle in the bottom left region) is initially in the occluded region and she/he starts to move towards the top. \textbf{Second Top (b): }At the moment that the pedestrian crosses the occlusion boundary, she/he is visible to the robot, and the forward reachable sets (green) are computed to avoid collision with the visible target. \textbf{Third Top (c): }The robot moves towards its goal while avoiding collision with visible/hidden agents, (open-loop trajectories are shown in black). \textbf{Bottom (d): }Finally, the robot reaches its goal, and its executed trajectory is illustrated (blue).}
\label{fig:ped_sim0}
\end{figure}

\begin{figure*}
    \centering
    \includegraphics[width=1\textwidth]{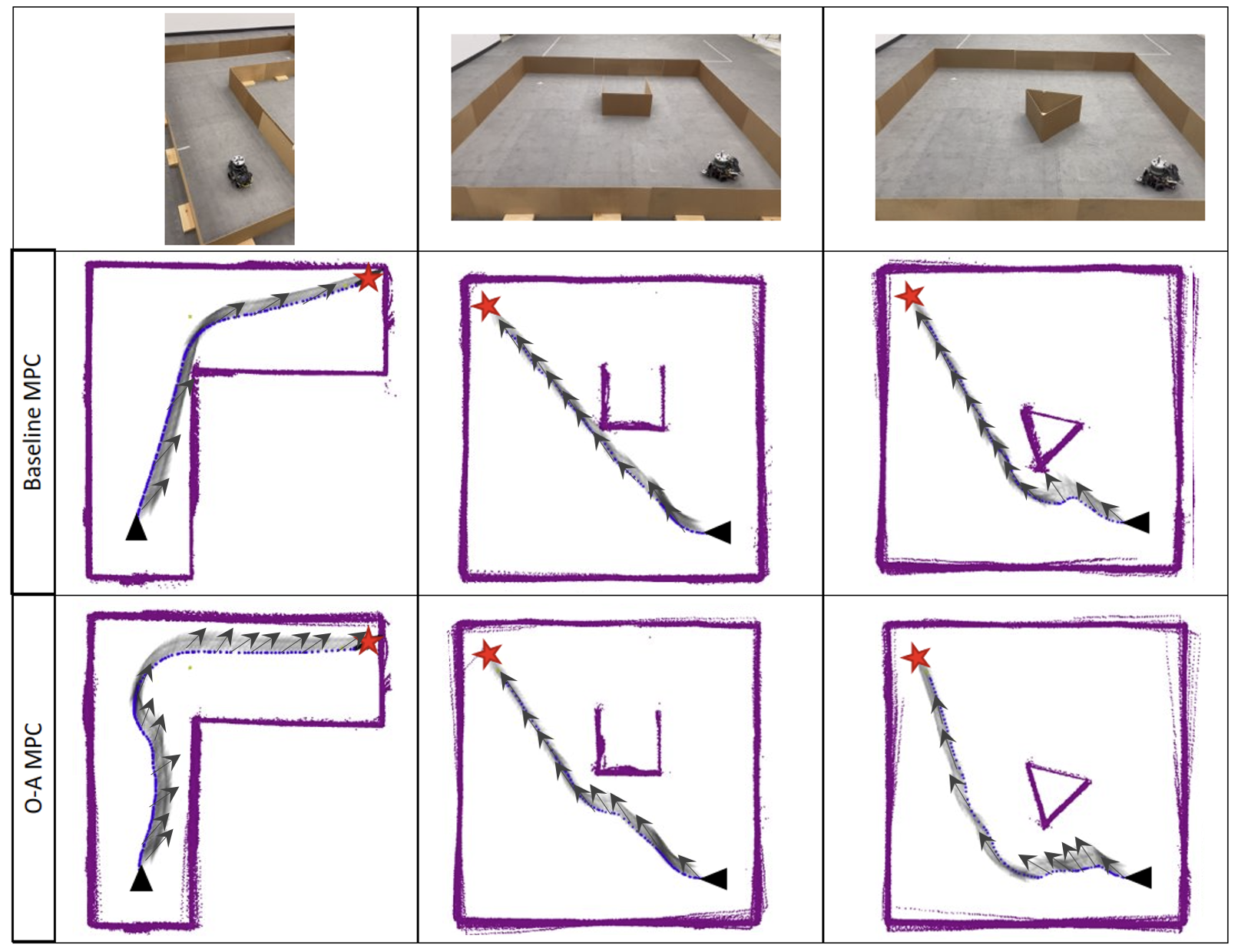}
    \caption[Short caption]{Experimental Results: Gray arrows show the robot's heading at each time step; Blue dots show the executed plan; LiDAR point clouds are shown in purple. \textbf{Top Row}: Different environments (corner, square obstacle, triangle obstacle (from left to right). \textbf{Second Row}: Baseline MPC is occlusion-agnostic and navigates the occluded regions by taking tight turns without considering collision avoidance with potential invisible dynamic agents. \textbf{Third Row}: Occlusion-Aware MPC planner takes wide turns in occluded regions.}
    \label{fig:EXP_results_2}    
\end{figure*}
Figure \ref{fig:EXP_results_2}, shows experimental results that compare baseline MPC with OA-MPC for robot navigation in different environments, a corner, square-shaped obstacle, and triangular-shaped obstacle. For all these experiments, the environment is unknown to the robot, so it relies on its LiDAR to perceive the environment in real-time. LiDAR point cloud data (purple) define the environment boundaries and obstacles.
% {\color{blue}For ease of computation, the point cloud is compressed into a 2D scan of radius and angle measurements generated via a simple projection method that rejects points outside the sensor's minimum and maximum range.}
% At each time step, the scan is aligned with respect to the global frame defined at the experiment start using robust ICP.
Both MPC planners detect the obstacles using LiDAR in real time and perform collision avoidance with obstacles.
However, they act differently in how they navigate the occluded regions. Blue dots represent the executed plan starting from the small black triangle and reaching the goal position (red star). Gray arrows show the heading of the robot at each time step. Both MPC planners operate at 10 Hz with a horizon of $N=10$. The left column shows a corner scenario in which OA-MPC takes a wider turn to navigate the corner to avoid collision with possible invisible agents, but baseline MPC which is agnostic towards occlusion regions navigates the corner without considering collision avoidance with occluded dynamic agents. The middle and right columns confirm the same results for navigation in environments with square-shaped and triangular-shaped obstacles.

Figures \ref{fig:EXP_dynamic_agent_baseline}, \ref{fig:EXP_dynamic_agent_OA_MPC}, and \ref{fig:EXP_dynamic_agent_comparison} represent experimental results that include a dynamic agent in the scene. Figure \ref{fig:EXP_dynamic_agent_baseline} illustrates the experimental results for the baseline MPC, where the robot operates in an environment with the presence of another dynamic agent. In the leftmost image labeled as 1, the ego robot is on the right side, and the dynamic agent is on the left. The ego robot plans to navigate around the corner to reach its goal. However, the baseline MPC is unaware of occlusions and does not account for potential dynamic agents in the occluded region. As shown in the middle image labeled as 2, this results in a collision. MPC planner operates at 10 Hz with a horizon of $N=5$. In the rightmost image, the ego robot's closed-loop trajectory is displayed in blue, while the dynamic agent's executed trajectory is shown in green. The goal positions for the ego robot and the dynamic agent are represented by red and green stars, respectively, and their initial positions are marked by black triangles. Lidar point clouds are shown in purple, with the green and blue circles indicating the points of collision between the robots.

Figure \ref{fig:EXP_dynamic_agent_OA_MPC} represents the experimental results for the OA-MPC, where the robot operates in the same corner environment with the presence of the same dynamic agent. The setup is the same as the previous baseline MPC and the OA-MPC planner operates at 10 Hz with a horizon of $N=5$. Images labeled 1,2,3,4, and 5 are sequential snapshots in time. The bottom right image shows the executed trajectories. The ego robot's closed-loop trajectory is displayed in blue, while the dynamic agent's executed trajectory is shown in green. The goal positions for the ego robot and the dynamic agent are represented by red and green stars, respectively, and their initial positions are marked by black triangles. Lidar point clouds are shown in purple. As seen in the snapshots and the trajectories, the OA-MPC takes a wider turn to navigate the corner to avoid collision with possible invisible agents. As expected, there is no collision, and both the ego robot and dynamic agent can safely reach their goals. Figure \ref{fig:EXP_dynamic_agent_comparison} compares the results of the OA-MPC versus the baseline MPC for the same corner scenario with a dynamic agent. As shown, OA-MPC is safe, it is aware of occluded areas and takes a wide turn close to the corner. So both the ego robot and the dynamic agent navigate the corner safely and reach their goals. However, baseline MPC which is agnostic to occlusion, does not take into account the potential invisible dynamic agent and takes a tight turn at the corner. So it results in a collision between the ego robot and the dynamic agent.   
\begin{figure*}
\centering 
\includegraphics[width=1\textwidth]{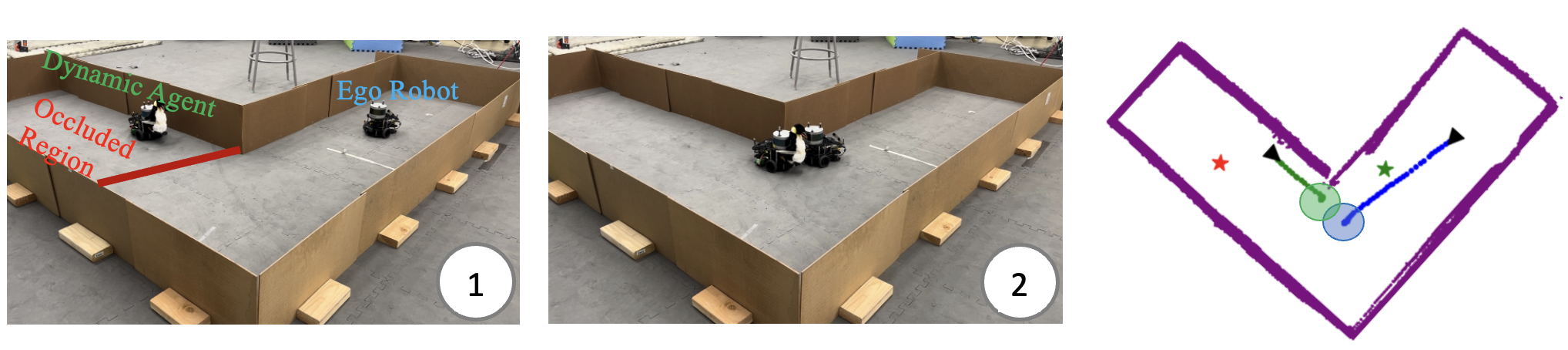}
	\caption{{\textbf{Left: }The ego robot initially is located on the right side, and is unaware of the existence of the dynamic agent shown on the right side, since the corner causes occlusion. \textbf{Middle: }The baseline MPC which is agnostic towards occlusion regions navigates the corner without considering collision avoidance with occluded dynamic agents and collides with the dynamic agent. \textbf{Right: }Blue dots represent the ego robot's executed plan starting from the small black triangle and reaching the goal position (red star). Green dots represent the ego robot's executed plan starting from the small black triangle and reaching the goal position (green star).} }
	\label{fig:EXP_dynamic_agent_baseline}
\end{figure*}

\begin{figure*}
\centering 
\includegraphics[width=1\textwidth]{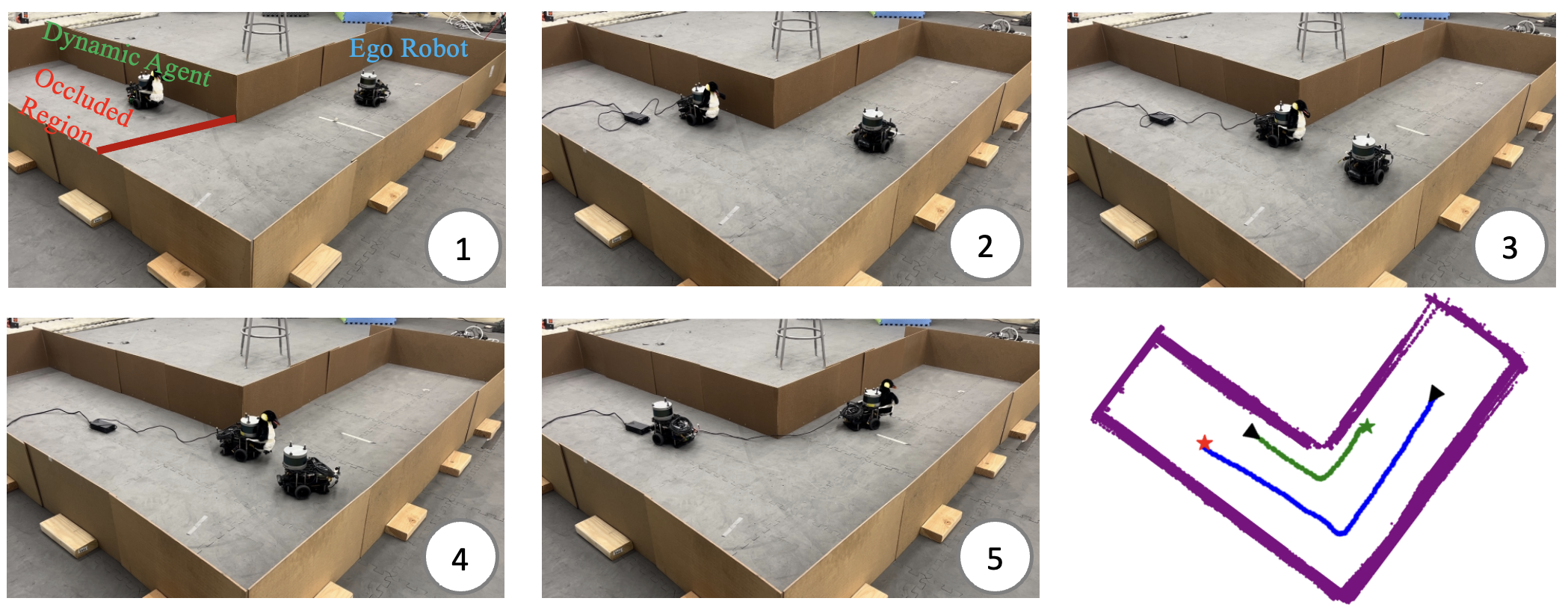}
	\caption{Images labeled as 1,2,3,4, and 5 are sequential snapshots in time. The bottom right image shows the executed trajectories. The ego robot's closed-loop trajectory is displayed in blue, while the dynamic agent's executed trajectory is shown in green. The goal positions for the ego robot and the dynamic agent are represented by red and green stars, respectively, and their initial positions are marked by black triangles. Lidar point clouds are shown in purple.}
	\label{fig:EXP_dynamic_agent_OA_MPC}
\end{figure*}
\begin{figure*}
    \centering 
    \includegraphics[width=1\textwidth]{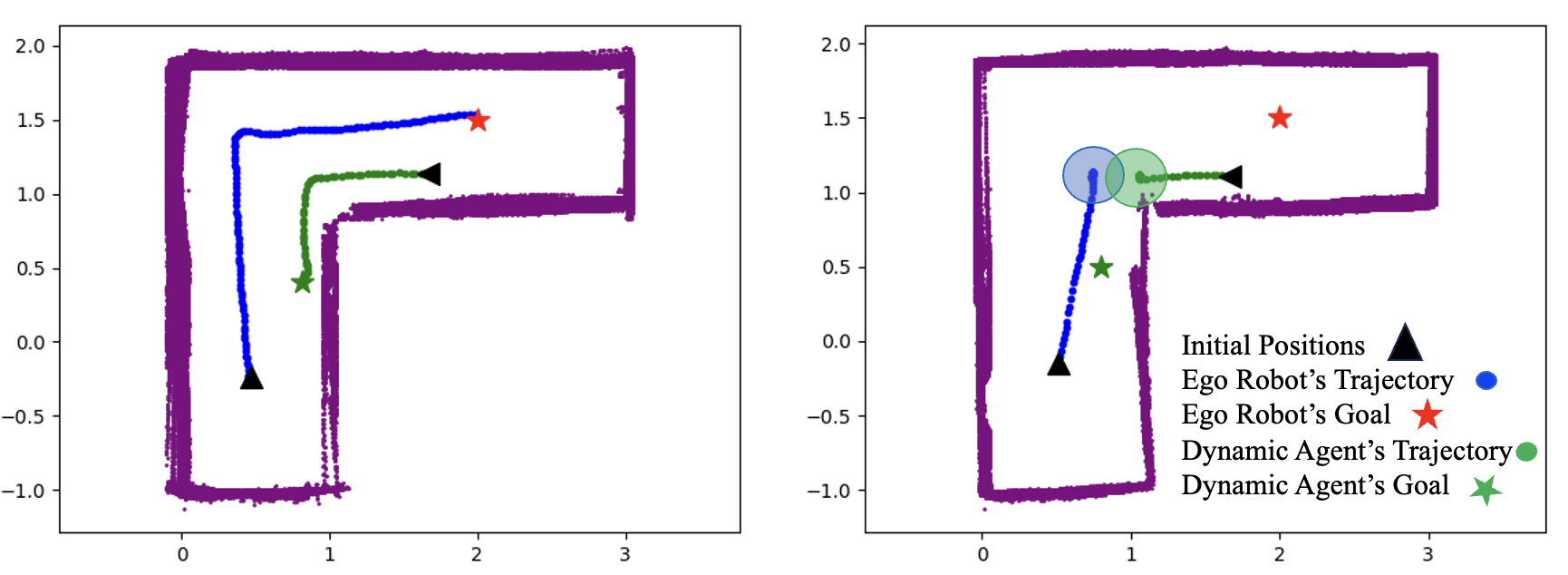}
    \caption{\textbf{Left: }OA-MPC, \textbf{Right: } Baseline MPC. By comparing the results from the OA-MPC and the baseline MPC, as seen the OA-MPC is aware of occluded regions and takes wider turns close to the corner (occluded area). So both ego robot, as well as dynamic agent, can reach their goal positions safely. However, baseline MPC is agnostic to occlusions and takes a tight turn close to the corner which causes a collision with the dynamic agent that is invisible to the ego robot. The Blue and green circles in the right figure represent the ego robot and the dynamic agent locations at the collision point. }
    \label{fig:EXP_dynamic_agent_comparison}
\end{figure*}

% \begin{figure}
% \centering 
% \includegraphics[width=0.9\linewidth]{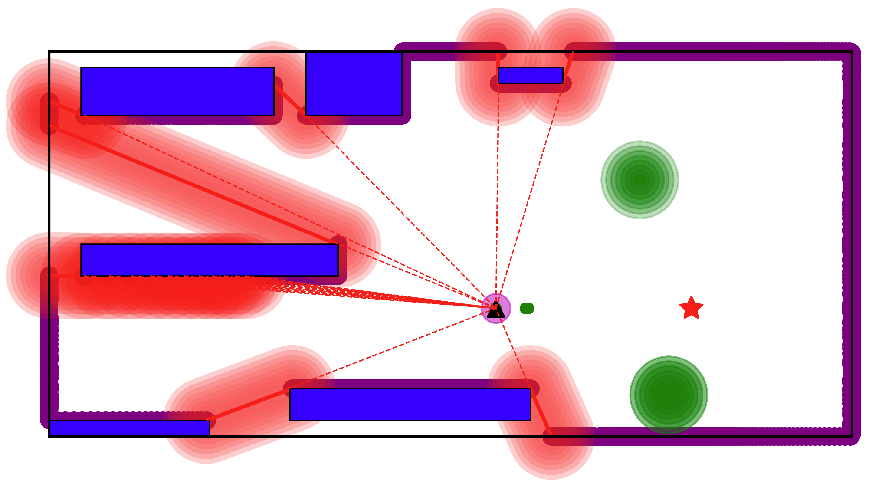}
% 	\caption{} 
% 	\label{fig:ped_sim_new1}
% \end{figure}

% \begin{figure}
% \centering 
% \includegraphics[width=0.9\linewidth]{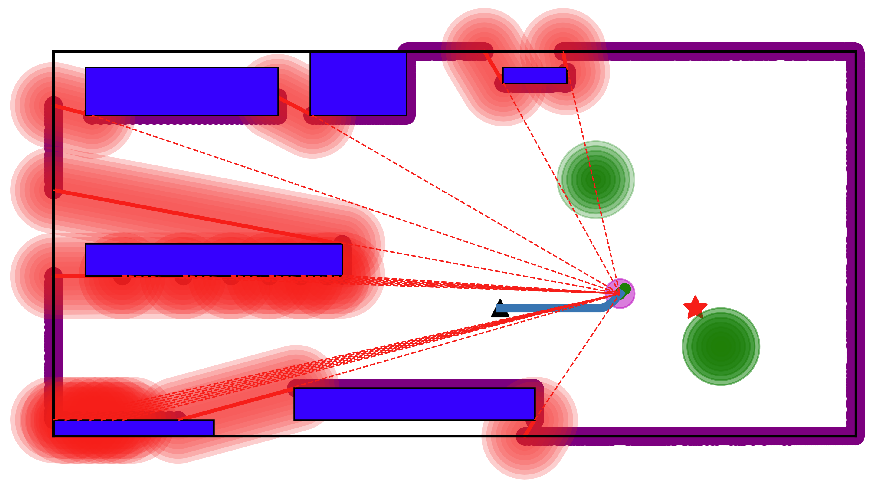}
% 	\caption{} 
% 	\label{fig:ped_sim_new1}
% \end{figure}

% \begin{figure}
% \centering 
% \includegraphics[width=0.9\linewidth]{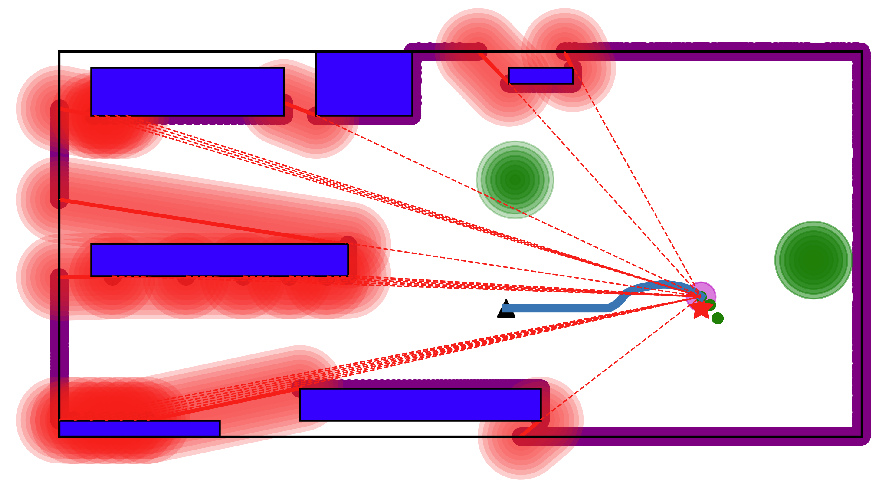}
% 	\caption{} 
% 	\label{fig:ped_sim_new1}
% \end{figure}

\section{Conclusion and Future Work}
\label{SEC8}
This work proposed a novel perception-aware real-time planning framework for the safe navigation of robotics systems in an a priori unknown dynamic environment where occlusions exist. The presented NMPC strategies provide safety guarantees and computational efficiency. For future work, the proposed algorithm can be extended to be used for safe navigation in 3-dimensional spaces for aerial robot applications.

%%%%%%%%%%%%%%%%%%%%%%%%%%%%%%%%%%%%%%%%%%%%%%%%%%%%%%%%%%%%%%%%%%%%%%%%%%%%%%%%
% \section*{APPENDIX}

% Appendixes should appear before the acknowledgment.

% \section*{ACKNOWLEDGMENT}
%%%%%%%%%%%%%%%%%%%%%%%%%%%%%%%%%%%%%%%%%%%%%%%%%%%%%%%%%%%%%%%%%%%%%%%%%%%%%%%%

% \begin{thebibliography}{99}
\bibliographystyle{unsrt}
\bibliography{references}

% \end{thebibliography}

\end{document}